\definecolor{mygreen}{RGB}{112, 180, 143}
\definecolor{myred}{RGB}{242, 128, 128}
\title{VAR RL Done Right: Tackling Asynchronous Policy Conflicts in Visual Autoregressive Generation}
\author{
    Shikun Sun$^{1,2}$, ~~Liao Qu$^2$, 
    ~~Huichao Zhang$^{2,\dagger}$, ~~Yiheng Liu$^2$, ~~Yangyang Song$^2$, ~~Xian Li$^{2}$, ~~Xu Wang$^{2}$,  
     ~~Yi Jiang$^{2}$, ~~Daniel K. Du$^2$, ~~Xinglong Wu$^2$, ~~Jia Jia$^{1, \dagger}$}
\affiliation[1]{Tsinghua University}
\affiliation[2]{ByteDance}
\abstract{
Visual generation is dominated by three paradigms: AutoRegressive (AR), diffusion, and Visual AutoRegressive (VAR) models. Unlike AR and diffusion, VARs operate on heterogeneous input structures across their generation steps, which creates severe asynchronous policy conflicts. This issue becomes particularly acute in reinforcement learning (RL) scenarios, leading to unstable training and suboptimal alignment.
To resolve this, we propose a novel framework to enhance Group Relative Policy Optimization (GRPO) by explicitly managing these conflicts. Our method integrates three synergistic components: 1) a stabilizing intermediate reward to guide early-stage generation; 2) a dynamic time-step reweighting scheme for precise credit assignment; and 3) a novel mask propagation algorithm, derived from principles of Reward Feedback Learning (ReFL), designed to isolate optimization effects both spatially and temporally. Our approach demonstrates significant improvements in sample quality and objective alignment over the vanilla GRPO baseline, enabling robust and effective optimization for VAR models.
}
\date{\today}
\newcommand{\ALG}{NextFlow-RL}
\newtheorem{definition}{Definition}
\begin{document}
\begin{CJK*}{UTF8}{gbsn}

\maketitle

\definecolor{chinese_red}{HTML}{8B4513}
\definecolor{english_blue}{HTML}{4169E1}

\definecolor{mycolor_blue}{HTML}{E7EFFA}
\definecolor{mycolor_green}{HTML}{E6F8E0}
\definecolor{mycolor_gray}{HTML}{ECECEC}
\definecolor{pearDark}{HTML}{2980B9}
\definecolor{lightergray}{HTML}{D3D3D3}


\section{Introduction}
\label{sec:intro}

Recent advances in visual generation are dominated by three paradigms: autoregressive (AR) models~\cite{vqvae,vqvae2,vqgan,yu2022scaling,wang2024emu3}, diffusion models~\cite{Rombach_2022_CVPR,podell2023sdxl,SD3,kolors,chen2024pixart,zheng2024cogview3,li2024playground,comanici2025gemini}, and visual autoregressive (VAR)~\cite{var, han2025infinity, liu2025infinitystar} models. Unlike AR and diffusion, VAR operates over a hierarchy of discrete token grids whose spatial shape changes across resolution levels; at each step, the model emits a \textit{parallel} grid of tokens rather than a single symbol or a fixed-length vector. While this heterogeneous, cross-scale design aligns with modern high-resolution backbones and enables fast synthesis, it also creates severe challenges for RL alignment, where the RL stage operates with far fewer samples than pretraining, exacerbating instability. \textbf{To our knowledge, we are the first to conduct a systematic RL study for text-to-image VAR models}, establishing a practical training recipe and revealing their unique RL failure modes.
The crux of the difficulty is \textit{asynchronous policy conflict} across timesteps. As illustrated in Figure~\ref{fig:motivation}, the number of query tokens can fluctuate by orders of magnitude from coarse to fine scales, inducing large variation in task similarity across steps, leading to instability, slow convergence, and suboptimal alignment when directly applying GRPO to VAR, as shown in training curve in Figure~\ref{fig:motivation_2}, where supervised RL at a partial-prefix scale outperforms its full-scale counterpart.
\begin{figure}[t]
  \centering
  \includegraphics[width=.7\linewidth]{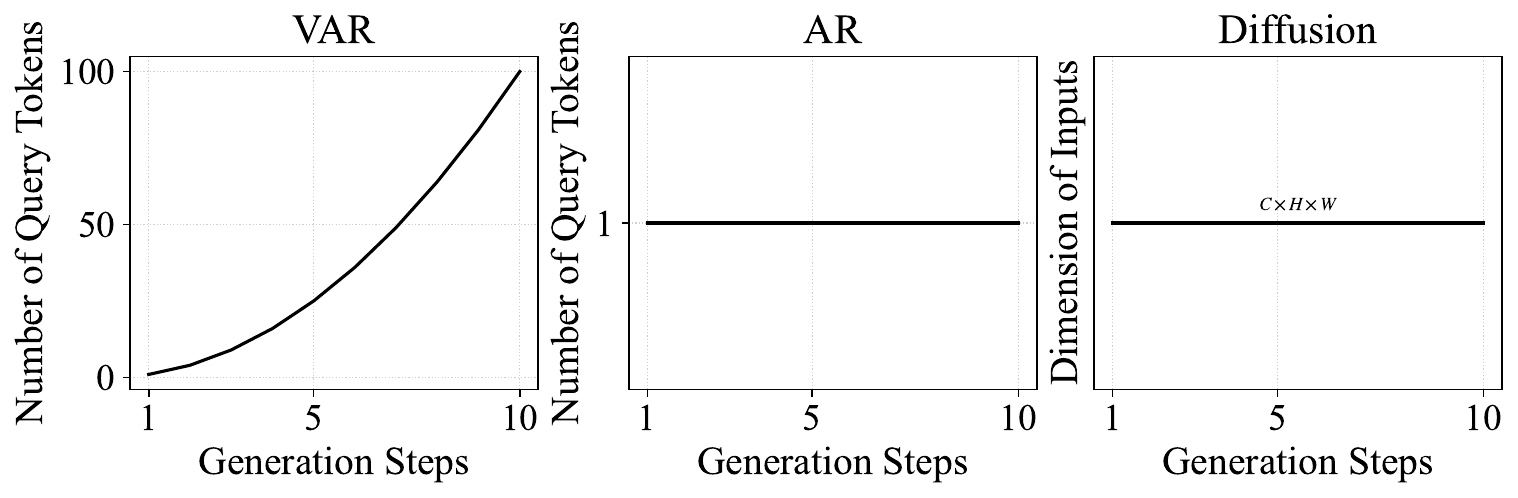}
  \caption{The number of query tokens across different timesteps in VAR generation fluctuates significantly, leading to varying task similarities and potential policy conflicts during RL optimization.}
  \label{fig:motivation}
\end{figure}


\noindent
To address this core problem, we introduce a simple and effective framework that enhances GRPO~\cite{xue2025dancegrpo,flowgrpo} for VAR with three synergistic components:
(i) a stabilizing intermediate reward that provides dense, low-variance feedback to early steps while preserving family-optimality;
(ii) a dynamic time-step reweighting that normalizes per-step contributions by the number of query tokens, balancing gradients across scales; and
(iii) a mask propagation mechanism, inspired by gradient-based reward-feedback practices in RL for generative models~\cite{xu2023imagereward}, that spatially and temporally isolates optimization effects to the tokens most responsible for the final return.
Concretely, we propose \textbf{Value as Middle Return} (VMR). Motivated by KL-regularized RL, we insert at a middle step $m$ an intermediate soft return and optimize the pre-$m$ and post-$m$ segments with GRPO in a stage-wise fashion. This construction yields more frequent, lower-variance feedback to early decisions and, crucially, does \textit{not} alter the optimal policy within the family: it is structure-preserving reward shaping that leaves the family-optimal solution unchanged while making it easier to reach in practice.
Second, we propose \textbf{Per-Action Normalization Weighting} (PANW). To counteract step-wise heterogeneity, we weight each step-$t$ loss by $k_t = 1/(h_t w_t)$, where $h_t \times w_t$ is the token-grid size at that step, followed by step-level normalization. This balances KL usage and gradient scales across timesteps, mitigating the dominance of high-resolution updates and improving stability.
Third, we introduce \textbf{Mask Propagation} (MP). We maintain a spatiotemporal mask that tracks tokens likely to contribute to the terminal reward and propagates this mask backward along the sequence. The mask gates intermediate rewards and gradients, focusing credit assignment on causally relevant regions and reducing variance across both space and time.
Empirically, our framework consistently stabilizes training and accelerates convergence over vanilla GRPO in this setting, delivering substantial gains in sample quality and objective alignment on text rendering benchmarks~\cite{chen2024textdiffuser,du2025textcrafter,tuo2023anytext}. It also achieves strong improvements over a TokenFlow-T2I starting point~\cite{qu2025tokenflow} and attains state-of-the-art results among diffusion-centric baselines~\cite{Rombach_2022_CVPR,podell2023sdxl,SD3,kolors,chen2024pixart,zheng2024cogview3,li2024playground,comanici2025gemini}. These results underscore that properly structuring the RL objective and balancing updates across heterogeneous steps are crucial for reliable text-to-image alignment.
Our contributions are threefold:
\begin{itemize}
    \item We diagnose asynchronous policy conflicts in RL for VAR and formalize the process as a deterministic MDP, clarifying why bandit-style GRPO becomes unstable under heterogeneous, parallel actions.
    \item We propose VMR, a structure-preserving intermediate reward that provides dense feedback to early steps and provably does not alter the family-optimal policy; together with PANW and MP, it balances step-wise gradients and sharpens spatiotemporal credit assignment.
    \item We present, to our knowledge, the first systematic RL framework for \textit{text-to-image VAR}, demonstrating robust, state-of-the-art improvements over GRPO baseline and diffusion-centric competitors, supported by comprehensive ablations.
\end{itemize}
\section{Related Work}
\begin{figure}[t]
  \centering
  \includegraphics[width=.7\linewidth]{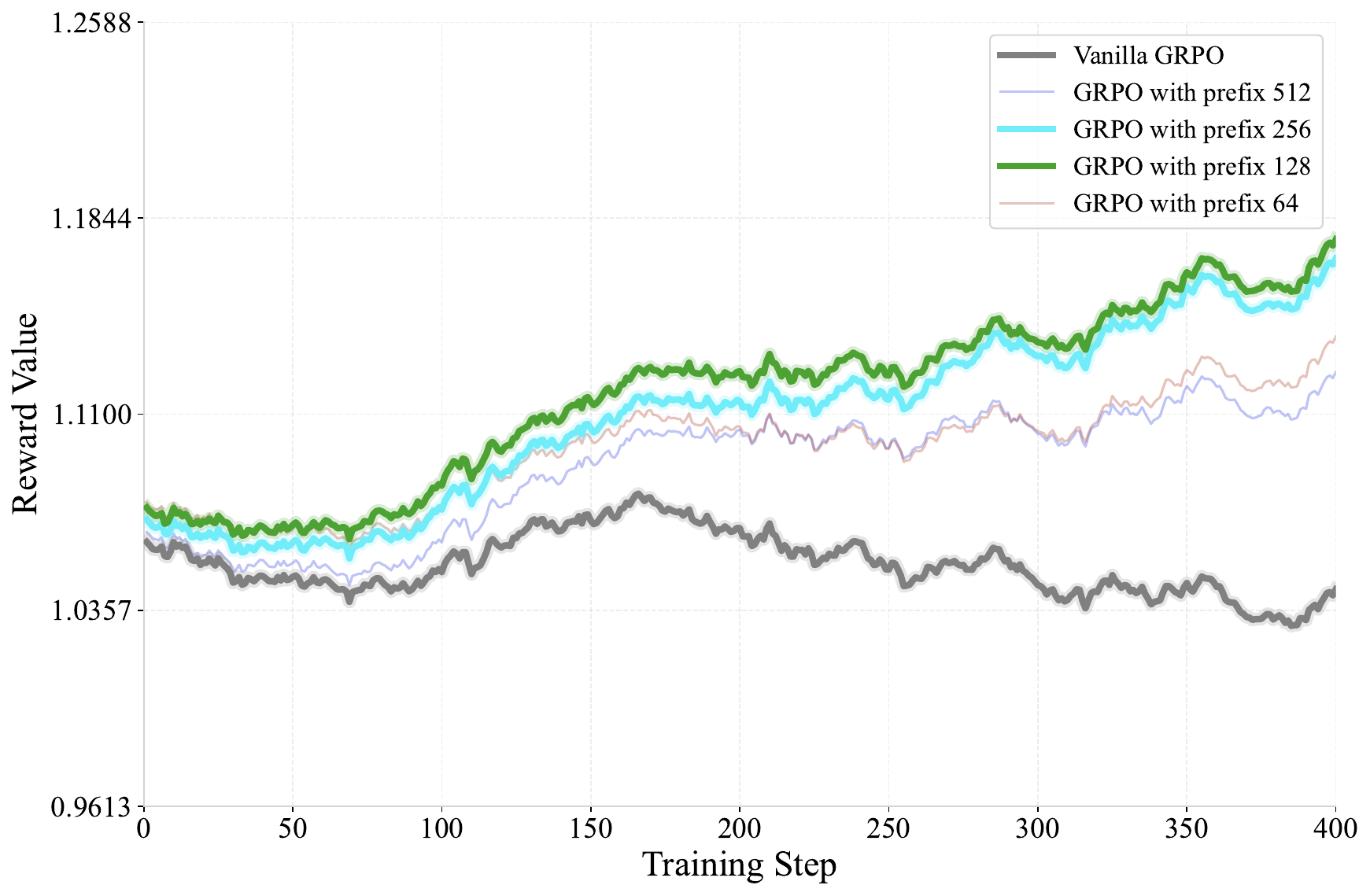}

   \caption{Comparison of training curves between vanilla GRPO and GRPO with VMR across varying prefix scales.}
   \label{fig:motivation_2}
\end{figure}

\subsection{Autoregressive Image Generation}

Autoregressive (AR) image generation has emerged as a compelling paradigm, inspired by the success of next-token prediction in large language models \cite{gpt1,gpt2,gpt3,gpt4}. Early works such as VQ-VAE \cite{vqvae,vqgan} and VQGAN \cite{vqgan} introduced learned codebooks to discretize images into tokens, enabling sequential generation in raster-scan order. This approach has since been widely adopted in text-to-image synthesis \cite{yu2022scaling} and unified multimodal understanding and generation tasks \cite{wu2025janus,chen2025janus,wang2024emu3}. However, raster-scan AR models typically require significantly more sampling steps compared to diffusion-based approaches, limiting their inference-time efficiency.
To address this limitation, VAR \cite{var} proposed a next-scale prediction strategy—a coarse-to-fine framework that predicts tokens across progressively finer spatial scales, aligning more closely with human perceptual hierarchy. This method substantially improves both sampling efficiency and image fidelity. Subsequent studies have further advanced this paradigm \cite{jiao2025flexvar,li2024imagefolder}, demonstrating strong performance in high-resolution text-to-image generation \cite{han2025infinity,qu2025tokenflow,voronov2024switti,tang2024hart,liu2025infinitystar,ma2024star,zhuang2025vargpt}.
Building on these advances, this work investigates the use of reinforcement learning to further enhance VAR-style autoregressive image generation, aiming to improve training stability and overall generative performance.

\subsection{Reinforcement Learning for Image Generation}

Reinforcement Learning (RL) has been widely adopted for enhancing the reasoning capabilities and human alignment of Large Language Models (LLMs) \cite{jaech2024openai,guo2025deepseek}. A significant recent advancement is GRPO \cite{shao2024deepseekmath}, which simplifies policy optimization by eliminating the value model required in PPO \cite{schulman2017proximal}, achieving notable empirical gains through a relative, group-based objective.
This paradigm has been successfully extended to visual generation to improve output fidelity and controllability. For instance, flow-based generative models \cite{flux,SD3} have been aligned with human preferences \cite{kirstain2023pick,wu2023human} and specific prompt following objectives \cite{huang2023t2i,ghosh2023geneval} using GRPO-based techniques \cite{flowgrpo,xue2025dancegrpo}.
The application of RL differs for autoregressive (AR) raster-scan models \cite{argrpo,wang2025simplear,jiang2025t2i,ma2025stage,tong2025delving,zhang2025group}. T2I-R1 \cite{jiang2025t2i} employs RL to bolster semantic reasoning for superior text-to-image alignment, AR-GRPO \cite{argrpo} establishes a direct baseline by integrating GRPO into the AR sampling process, and SimpleAR \cite{wang2025simplear} unifies model training with RL to holistically improve output quality.
Despite these advances, the use of RL in next-scale prediction AR models remains largely unexplored. While \cite{gallici2025fine} applies GRPO to a class-conditioned model \cite{var}, it does not address the core structural challenges introduced by multi-scale parallel token generation. Compared to raster-scan AR or diffusion models, this family of architectures presents unique challenges compared to raster-scan AR or diffusion-based approaches, primarily due to structural differences in policy optimization that arise from multi-scale, parallel token generation. These differences necessitate tailored reward design and optimization strategies that have yet to be thoroughly investigated.

\section{Methodology}
\label{method}
\subsection{Preliminary}
\subsubsection{Visual AutoRegressive Models}
Visual AutoRegressive (VAR) models generate images by producing a sequence of discrete tokens that represent the image at multiple resolutions. Specifically, an image is represented as a sequence of token grids \(\mathbf{r}_1, \mathbf{r}_2, \ldots, \mathbf{r}_T\), where each \(\mathbf{r}_t\) corresponds to a grid of tokens at resolution level \(t\) shaped \(h_t \times w_t\). The generation process starts from the lowest resolution and progressively refines the image by generating higher-resolution token grids conditioned on the previously generated lower-resolution grids.
\subsubsection{VAR Sequences as MDPs}
Because we aim to optimize VAR using a Hierarchical Reinforcement Learning~\cite{nachum2018data} framework, unlike the bandit-style setup used in vanilla GRPO, we must explicitly define the MDP \((\mathcal{S}, \mathcal{A}, P, r)\) for VAR sequence generation. We formulate the process as follows (for simplicity, we omit external control inputs such as text or images):

\begin{itemize}
    \item \textbf{Action Space} \(\mathcal{A}\): Each action \(\mathbf{a}^\theta_t (\mathbf{s}_t)= \mathbf{r}_{t+1} \in \mathcal{A}\) corresponds to generating the next resolution tokens grid \(h_{r+1} \times w_{r+1}\).
    \item \textbf{State Space} \(\mathcal{S}\): Each state is the partial VAR sequence \(\mathbf{s}_t = (\mathbf{r}_1, \mathbf{r}_2, \ldots, \mathbf{r}_t) \in \mathcal{S}\).
    \item \textbf{State Transition Probability} \(P\): The transition is deterministic, given by \(P(\mathbf{s}_{t+1} \mid \mathbf{s}_t, \mathbf{a}_t) = \delta_{(\mathbf{r}_1, \mathbf{r}_2, \ldots, \mathbf{r}_{t+1})}\).
    \item \textbf{Reward Function} \(r\): As in typical image generation tasks, the environment only provides a final return \(R(\mathbf{s}_T) = \sum_i r_i\), reflecting evaluation in real-world settings.
\end{itemize}

\subsubsection{The Optimal Solution for KL-Regularized RL}
A well-established result in KL-regularized reinforcement learning is that the optimal policy
\(\pi^*\), which maximizes the expected return under a soft KL constraint, is given by~\cite{ziebart2008maximum}:
\begin{equation}
    \label{general optimal solution}
    \pi^*(\mathbf{a}_t \mid \mathbf{s}_t)
    \propto
    \pi_{\text{old}}(\mathbf{a}_t \mid \mathbf{s}_t)
    \exp\left(\frac{1}{\eta} Q^*(\mathbf{s}_t, \mathbf{a}_t)\right),
\end{equation}
where \(\eta\) is the temperature parameter and \(Q^*(\mathbf{s}_t, \mathbf{a}_t)\) denotes the optimal action-value
function:
\begin{equation}
    Q^*(\mathbf{s}_t, \mathbf{a}_t)
    = \mathbb{E}_{\pi^*}\left[ R(\mathbf{s}_T) \,\mid\, \mathbf{s}_t, \mathbf{a}_t \right].
\end{equation}

This formulation shows that the optimal policy is shaped jointly by the prior policy
\(\pi_{\text{old}}\) and the expected future return captured by the action-value function \(Q^*\).
Moreover, when the environment transition is deterministic~\cite{zhou2025q}, the future return is fully determined
by \(\pi_{\text{old}}\), and the optimal \(Q\)-function can be written as:
\begin{equation}
\label{easy_optimal_Q}
    Q^*(\mathbf{s}_t, \mathbf{a}_t)
    = \eta \ln \mathbb{E}_{\pi_{\text{old}}}
        \left[\exp\left(\frac{1}{\eta} R(\mathbf{s}_T)\right) \,\mid\, \mathbf{s}_t, \mathbf{a}_t\right].
\end{equation}
\subsection{Motivation}
The key challenge in optimizing VAR sequences using RL methods lies in the asynchronous policy conflicts that arise across different timesteps. As shown in Figure~\ref{fig:motivation}, unlike diffusion models and autoregressive models, the number of query tokens fluctuates dramatically during the generation process, leading to substantial variation in task similarity across timesteps. This inconsistency is especially problematic in RL settings, where the available training data is significantly more limited compared to large-scale pretraining, thereby exacerbating policy instability and slowing convergence. In addition, because multiple tokens are generated in parallel within the same resolution scale, the optimal policy structure may deviate from that of typical sequential RL formulations, requiring further analysis.

\noindent
To address the challenges mentioned above, we propose three techniques to stabilize and accelerate the RL optimization of VAR:
\begin{itemize}
    \item We introduce \textbf{Value-as-Middle-Return (VMR)} to decompose the full-sequence RL objective into a two-stage optimization problem, effectively \textit{mitigating multi-step conflict} during training.
    \item We apply \textbf{Per-Action Normalization Weighting (PANW)} to balance the contributions of different timesteps, ensuring stable learning dynamics even under varying task similarities.
    \item We implement \textbf{Mask Propagation (MP)} to prioritize updates on the most relevant tokens and to reduce imbalance across scales.
\end{itemize}
\subsection{Value as Middle Return}
\begin{figure}[t]
  \centering
  \includegraphics[width=.6\linewidth]{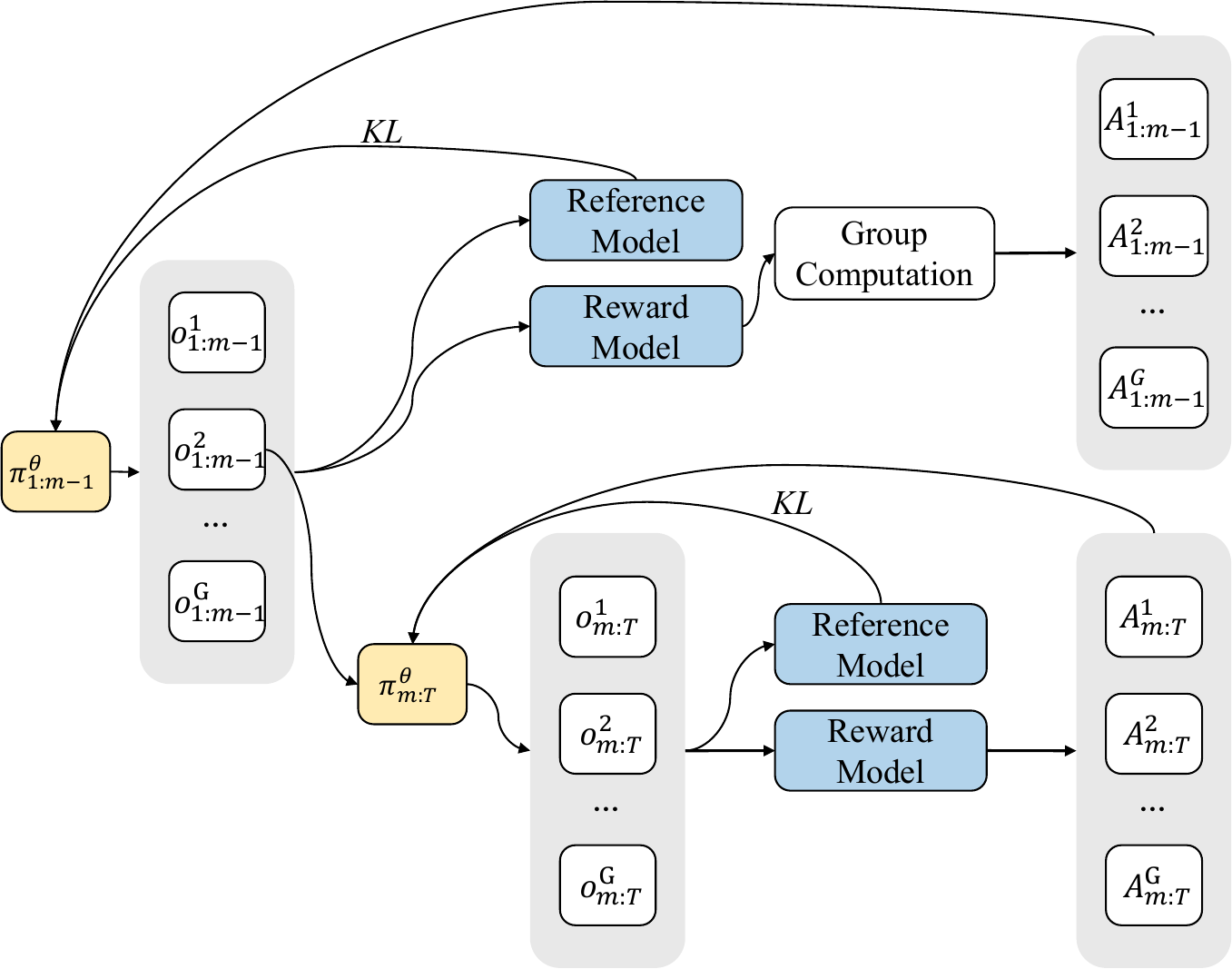}

   \caption{The pipeline of our two-stage GRPO is as follows: we use Monte Carlo estimation to compute the reward for \(\pi^\theta_{1:m-1}\), and then apply GRPO to the two states separately.}
   \label{fig:two stage grpo}
\end{figure}
\subsubsection{Formulation}
The core idea of VMR is to decompose the full-horizon, KL-regularized RL problem into two coupled subproblems split at a middle timestep \(m\). This alleviates cross-horizon action conflicts and yields denser feedback. Reusing the notation of \eqref{general optimal solution}, we define the middle-step soft value as follows:
\begin{definition}[Middle-step soft value]
\label{def:middle_soft_value}
For any \(m\in\{1,\dots,T\}\),
\begin{equation}
\label{eq:vmr_middle_value}
V_m^\ast(\mathbf{s}_m)
\;=\;
\eta \log
\mathbb{E}_{\pi_{\mathrm{old}}}
\!\left[
\exp\!\Big(\tfrac{1}{\eta} R(\mathbf{s}_T)\Big)
\,\middle|\, \mathbf{s}_m
\right].
\end{equation}
\end{definition}

\noindent
VMR introduces a middle return \(V^*_m(\mathbf{s}_m)\) to partition the generation into prefix and suffix subtasks. Each is trained independently with a local KL penalty:
\begin{equation}
\label{eq:vmr_two_stage_objectives}
\begin{aligned}
\text{Suffix}:&
\max_{\pi_{m:T-1}}
\mathbb{E}\!\big[R(\mathbf{s}_T)\mid \mathbf{s}_m\big]
-\eta\,\mathrm{KL}\!\big(\pi_{m:T-1}\,\|\,\pi_{\mathrm{old},m:T-1}\big),\\[-1pt]
\text{Prefix}:&
\max_{\pi_{1:m-1}}
\mathbb{E}\!\big[V_m^\ast(\mathbf{s}_m)\big]
-\eta\,\mathrm{KL}\!\big(\pi_{1:m-1}\,\|\,\pi_{\mathrm{old},1:m-1}\big).
\end{aligned}
\end{equation}

\noindent
This stage-wise decomposition stabilizes training by keeping per-stage token lengths comparable, while preserving the global objective value. The detailed algorithm is shown in Figure~\ref{fig:two stage grpo}.

\subsubsection{Analysis}
We now examine two central questions:

\begin{itemize}
    \item What is the optimal solution for the VAR family under KL-regularized RL?
    \item Does introducing the middle return \(V^*(\mathbf{s}_m)\) alter the optimal policy?
\end{itemize}

\textbf{VAR Family Definition.}
We first specify the policy class of interest as the following definition:

\begin{definition}[VAR family \(\mathcal{M}_{\pi}\)]
\label{def:var_family_inline}
A policy \(\pi^\theta\) belongs to \(\mathcal{M}_{\pi}\) if, at each timestep \(t\), the action grid
\(\mathbf{a}_t^\theta = \mathbf{r}_{t+1}\in\mathbb{R}^{h_{t+1}\times w_{t+1}}\)
factorizes across spatial sites given the state \(\mathbf{s}_t\):
\begin{equation}
\pi^\theta(\mathbf{a}_t\mid\mathbf{s}_t)
=\prod_{(i,j)} \pi^\theta_{t,(i,j)}(\mathbf{a}_{t,(i,j)}\mid\mathbf{s}_t).
\end{equation}
This mirrors the implementation of VAR, where tokens on the same resolution are generated in parallel. And this does not mean that different blocks of the generated image are independent, because the state \(\mathbf{s}_t\) contains all the previously generated tokens, which can provide global context.
\end{definition}

\textbf{Q1: Optimality within \(\mathcal{M}_{\pi}\).}
The globally optimal KL-regularized policy \(\pi^\ast\) from \eqref{general optimal solution} need not belong to \(\mathcal{M}_{\pi}\). Nevertheless, within this restricted family, there exists a unique constrained optimum \(\pi^\dagger\):

\begin{definition}[Constrained optimal policy]
\label{def:var_constrained}
The optimal VAR policy is
\begin{equation}
  \pi^\dagger
=\arg\max_{\pi\in\mathcal{M}_{\pi}}J(\pi), 
J(\pi)
=\mathbb{E}_{\pi}[R(s_T)]
-\eta\,\mathrm{KL}(\pi\,\|\,\pi_{\mathrm{old}}).  
\end{equation}
\end{definition}

\noindent
The $\pi^\dagger$ satisfies the following property:

\begin{restatable}[Reverse-KL characterization of \(\pi^\dagger\)]{theorem}{thmrklproj}
\label{thm:reverse_kl_projection}
At each state \(\mathbf{s}_t\), the constrained optimum satisfies
\begin{equation}
\pi^\dagger(\cdot\mid \mathbf{s}_t)
=
\arg\min_{\pi\in\mathcal{M}_{\pi}(\mathbf{s}_t)}
\mathrm{KL}\!\big(\pi(\cdot\mid \mathbf{s}_t)\,\|\,\pi^\ast(\cdot\mid \mathbf{s}_t)\big),
\end{equation}
where \(\pi^\ast\) is the global soft-optimal policy.
\end{restatable}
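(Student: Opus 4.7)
The plan is to use the standard KL-duality identity to convert the constrained KL-regularized maximization into a trajectory-level reverse-KL minimization against the soft-optimal policy, and then exploit the product structure of $\mathcal{M}_{\pi}$ to reduce that minimization to a per-state projection. Using the soft-Bellman equation satisfied by $\pi^\ast$ from Eq.~\eqref{general optimal solution} together with the explicit form of $Q^\ast$ in deterministic transitions from Eq.~\eqref{easy_optimal_Q}, I would first derive the trajectory-level identity
\begin{equation*}
J(\pi) \;=\; V^\ast(\mathbf{s}_0) \;-\; \eta\,\mathrm{KL}\!\bigl(\mathbb{P}_\pi\,\|\,\mathbb{P}_{\pi^\ast}\bigr),
\end{equation*}
where $\mathbb{P}_\pi$ and $\mathbb{P}_{\pi^\ast}$ are the trajectory laws under $\pi$ and $\pi^\ast$. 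This follows by a telescoping calculation in which the per-step log-ratios against $\pi_{\mathrm{old}}$ combine with the soft advantages $Q^\ast-V^\ast$ so that all intermediate soft values cancel, leaving only the boundary term $V^\ast(\mathbf{s}_0)$. Maximizing $J(\pi)$ over $\mathcal{M}_{\pi}$ is therefore equivalent to minimizing $\mathrm{KL}(\mathbb{P}_\pi\|\mathbb{P}_{\pi^\ast})$ over $\mathcal{M}_{\pi}$.

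Next, because transitions are deterministic, both trajectory laws factorize temporally as $\mathbb{P}_\pi(\tau)=\prod_t \pi(\mathbf{a}_t|\mathbf{s}_t)$ and symmetrically for $\pi^\ast$, so the trajectory KL admits the per-state decomposition
\begin{equation*}
\mathrm{KL}\!\bigl(\mathbb{P}_\pi\,\|\,\mathbb{P}_{\pi^\ast}\bigr)
\;=\;\sum_{t=0}^{T-1}
\mathbb{E}_{\mathbf{s}_t\sim\mathbb{P}_\pi}\!\Bigl[\mathrm{KL}\!\bigl(\pi(\cdot|\mathbf{s}_t)\,\|\,\pi^\ast(\cdot|\mathbf{s}_t)\bigr)\Bigr].
\end{equation*}
Since the constraint defining $\mathcal{M}_{\pi}$ acts independently at each state (it factorizes the distribution over spatial sites for a fixed $\mathbf{s}_t$), the per-state reverse-KL projection of $\pi^\ast(\cdot|\mathbf{s}_t)$ onto $\mathcal{M}_{\pi}(\mathbf{s}_t)$ minimizes each integrand pointwise for every realization of $\mathbf{s}_t$. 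I would then close the argument by backward induction on $t$: at $t=T-1$ no later summand depends on $\pi(\cdot|\mathbf{s}_{T-1})$ and the projection statement is immediate; for $t<T-1$, having fixed the downstream factors to their projections by the induction hypothesis, the step-$t$ problem reduces to a state-local KL-regularized maximization whose optimum is again the reverse-KL projection of $\pi^\ast(\cdot|\mathbf{s}_t)$.

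The main obstacle is the coupling between the sampling measure $\mathbb{E}_{\mathbf{s}_t\sim\mathbb{P}_\pi}$ and the policy itself: a deviation at an upstream state reweights every downstream KL summand, so pointwise minimization is not \emph{a priori} equivalent to minimization of the sum. To make the backward induction rigorous, I would establish a monotonicity lemma stating that replacing $\pi(\cdot|\mathbf{s}_t)$ by its reverse-KL projection at a visited state can only decrease the corresponding nonnegative summand without increasing the others, by exploiting that the projection is the state-wise minimizer and that the induced trajectory measure remains a valid probability. Handling reachable states of measure zero under either $\mathbb{P}_\pi$ or $\mathbb{P}_{\pi^\ast}$ via a continuity or limiting argument is the remaining technical care; aggregating the per-state projection identity across all $t$ then yields the claimed characterization of $\pi^{\dagger}$.
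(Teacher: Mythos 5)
Your overall route is the same as the paper's: convert the KL-regularized objective into a trajectory-level reverse KL against the soft-optimal trajectory law (your telescoping soft-Bellman derivation of \(J(\pi)=\eta\log Z-\eta\,\mathrm{KL}(p_\pi\,\|\,p^\ast)\) is equivalent to the paper's tilted-measure, partition-function derivation), then decompose the trajectory KL into per-state KL terms weighted by the visitation distribution, and finally invoke the product structure of \(\mathcal{M}_{\pi}\) to decouple the minimization state by state. Up to that point you match the paper step for step, and you correctly flag the one subtlety the paper's proof passes over silently: the weights \(\mathbf{s}_t\sim \mathbb{P}_\pi\) in the per-state decomposition depend on the policy being optimized, so pointwise minimization of each summand is not automatically minimization of the sum.

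However, the device you propose to close that gap does not work as stated, and this is the genuine flaw in your write-up. Your monotonicity lemma claims that replacing \(\pi(\cdot\mid\mathbf{s}_t)\) by its reverse-KL projection decreases the step-\(t\) summand ``without increasing the others.'' That is false in general: changing the conditional at time \(t\) changes the law of all later states, and the later summands are expectations of the nonnegative residuals \(c(\mathbf{s})=\min_{q\in\mathcal{M}_{\pi}(\mathbf{s})}\mathrm{KL}\!\big(q\,\|\,\pi^\ast(\cdot\mid\mathbf{s})\big)\), which are state-dependent and generically nonzero once the factorization constraint binds; shifting visitation toward states with larger residuals increases those terms. For the same reason your backward-induction step fails: with the downstream factors fixed to their projections, the step-\(t\) problem is local KL plus the expected downstream residual, whose unconstrained minimizer is proportional to \(\pi^\ast(\mathbf{a}_t\mid\mathbf{s}_t)\,e^{-c_{t+1}(\mathbf{s}_{t+1}(\mathbf{s}_t,\mathbf{a}_t))}\), so the constrained optimum at step \(t\) is the projection of this tilted target, not of \(\pi^\ast(\cdot\mid\mathbf{s}_t)\) itself, unless the residual cost-to-go is constant (e.g.\ zero) across reachable next states. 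The paper's Step 6 simply asserts that the product structure of \(\mathcal{M}_{\pi}\) makes the per-state minimizations independent, implicitly treating the visitation weights as fixed; stating that assumption explicitly would put you on the same footing as the paper, but the extra machinery you add does not actually supply the missing justification.
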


\noindent
Thus, the best policy within the VAR family is obtained by reverse-KL projecting the globally optimal, potentially non-factorized policy \(\pi^\ast\) onto  \(\mathcal{M}_{\pi}\).

\textbf{Q2: Effect of introducing VMR.}
We now analyze whether VMR changes the optimal solution. We have the following theorem:

\begin{restatable}[Two-stage invariance]{theorem}{thmtsinvar}
\label{thm:two_stage_invariance_var}
Let \(V_m^\ast(s_m)\) be defined as in \eqref{eq:vmr_middle_value}.  
Within \(\mathcal{M}_{\pi}\), solving the suffix problem in \eqref{eq:vmr_two_stage_objectives} yields \(\pi_{m:T-1}^\dagger\); optimizing the prefix problem using \(V_m^\ast\) as its sole reward gives \(\pi_{1:m-1}^\dagger\).  
Then, the concatenation
\(\pi^\dagger=\pi_{1:m-1}^\dagger\!\oplus\!\pi_{m:T-1}^\dagger\)
uniquely maximizes the full-horizon objective \(J(\pi)\).  
Within \(\mathcal{M}_{\pi}\), replacing each subpolicy by its per-state reverse-KL projection onto the factorized family yields the constrained optimum \(\pi^\dagger\).
\end{restatable}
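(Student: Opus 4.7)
The plan is to prove the theorem by a two-step reduction: (i) a soft Bellman-type decomposition of the full-horizon objective at the split point $m$, which identifies $V_m^\ast$ as the correct intermediate return; followed by (ii) a pointwise application of Theorem~\ref{thm:reverse_kl_projection} to each of the resulting sub-problems to lift the unconstrained optima back into $\mathcal{M}_\pi$.

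First, I would unroll $J(\pi)$ at index $m$ using the chain rule for the trajectory KL and the tower property for the expected return, obtaining
\begin{equation*}
J(\pi)=\mathbb{E}_{\pi_{1:m-1}}\!\Big[\mathbb{E}_{\pi_{m:T-1}}[R(\mathbf{s}_T)\mid \mathbf{s}_m]-\eta\,\mathrm{KL}(\pi_{m:T-1}\|\pi_{\mathrm{old},m:T-1})\Big]-\eta\,\mathrm{KL}(\pi_{1:m-1}\|\pi_{\mathrm{old},1:m-1}).
\end{equation*}
Since the inner bracket depends on the prefix only through $\mathbf{s}_m$, I can maximize inside and outside independently. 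Over the unconstrained family, the classical soft Bellman update shows that the inner maximum equals $V_m^\ast(\mathbf{s}_m)$ as defined in \eqref{eq:vmr_middle_value}, and its maximizer is the restriction of the global soft optimum $\pi^\ast$ to steps $m,\dots,T-1$. Substituting $V_m^\ast$ for the bracket reduces the outer problem to the prefix objective in \eqref{eq:vmr_two_stage_objectives}, whose unconstrained maximizer is the corresponding prefix of $\pi^\ast$.

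Second, I would apply Theorem~\ref{thm:reverse_kl_projection} state by state to each sub-problem. Within $\mathcal{M}_\pi$, the suffix sub-problem is itself a KL-regularized RL problem with prior $\pi_{\mathrm{old},m:T-1}$ and terminal reward $R(\mathbf{s}_T)$; by Theorem~\ref{thm:reverse_kl_projection} its constrained optimum is the per-state reverse-KL projection of the suffix of $\pi^\ast$, which coincides exactly with the suffix of $\pi^\dagger$. An identical argument for the prefix sub-problem, now with $V_m^\ast$ as terminal reward and prior $\pi_{\mathrm{old},1:m-1}$, identifies its constrained optimum with the prefix of $\pi^\dagger$. Concatenating yields a policy in $\mathcal{M}_\pi$ whose per-state conditional matches $\pi^\dagger$ everywhere; uniqueness follows from strict convexity of the per-state reverse-KL objective that makes each projection unique.

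The main obstacle I foresee is the subtle worry that $V_m^\ast$ in \eqref{eq:vmr_middle_value} is defined through $\pi_{\mathrm{old}}$ rather than through the constrained-optimal suffix, so it is a priori unclear that it is the ``right'' terminal value for the constrained prefix problem. The resolution is that Theorem~\ref{thm:reverse_kl_projection} is a statewise projection statement: once one establishes, in the unconstrained family, that $V_m^\ast$ together with soft Bellman yields $\pi^\ast$, projecting the prefix and suffix of $\pi^\ast$ onto $\mathcal{M}_\pi$ pointwise at each state produces the same two-stage decomposition for $\pi^\dagger$. A secondary care point is that the chain-rule decomposition implicitly integrates against the marginals induced by the current $\pi_{1:m-1}$, so the local KL appearing in the prefix sub-problem must be read under this induced marginal; this is consistent with the form written in \eqref{eq:vmr_two_stage_objectives} and matches the decomposition exactly.
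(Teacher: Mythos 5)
Your proposal follows essentially the same route as the paper's proof: split $J(\pi)$ at the index $m$, observe that for fixed $\mathbf{s}_m$ the inner suffix maximization is the standard soft-control problem whose value is exactly $V_m^\ast(\mathbf{s}_m)$, substitute this value to reduce the outer problem to the prefix objective in \eqref{eq:vmr_two_stage_objectives}, and conclude by concatenation. Your second step, which explicitly invokes Theorem~\ref{thm:reverse_kl_projection} per state to transfer the statement to $\mathcal{M}_\pi$ (and your remark that $V_m^\ast$ being defined via $\pi_{\mathrm{old}}$ is resolved by the statewise projection), is slightly more detailed than the paper, which leaves that constrained-family step implicit, but the argument is the same in substance.
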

\noindent
The theorem implies that the intermediate value \(V_m^\ast\) encapsulates all downstream contributions from step \(m\) onward. When used as the sole reward for the prefix, it ensures that prefix and suffix optimization remain consistent with the original full-horizon solution. This two-stage structure preserves policy invariance, avoids credit-assignment interference, and stabilizes training across varying token lengths.

\noindent
VMR provides a principled prefix–suffix decomposition of KL-regularized RL that maintains optimality within both the unconstrained and VAR-constrained settings, while reducing multi-step conflicts and improving learning stability in coarse-to-fine generation.

\subsection{Per-Action Normalization Weighting}

As shown in Figure~\ref{fig:motivation}, the task similarity across different timesteps in VAR generation can vary significantly due to the fluctuating number of query tokens. To address this issue, we propose Per-Action Normalization Weighting, which normalize the contributions of each action based on the number of query tokens at that timestep. Specifically, for each timestep \(t\), we compute a normalization weight \(k_t\) defined as:
\begin{equation}
\label{equ: coeff}
k_t = \frac{1}{(h_t \times w_t)^\alpha}
\end{equation}
where \(h_t\) and \(w_t\) are the height and width of the token grid at timestep \(t\) and \(\alpha\) is a decay exponent. During training, we weight the loss associated with each action \(\mathbf{a}_t\) by its corresponding normalization weight \(k_t\). This approach ensures that timesteps with a larger number of query tokens do not disproportionately influence the learning process, thereby promoting a more balanced optimization across all timesteps.

\subsection{Mask Propagation}
\begin{figure}[t]
  \centering
  \includegraphics[width=.45\linewidth]{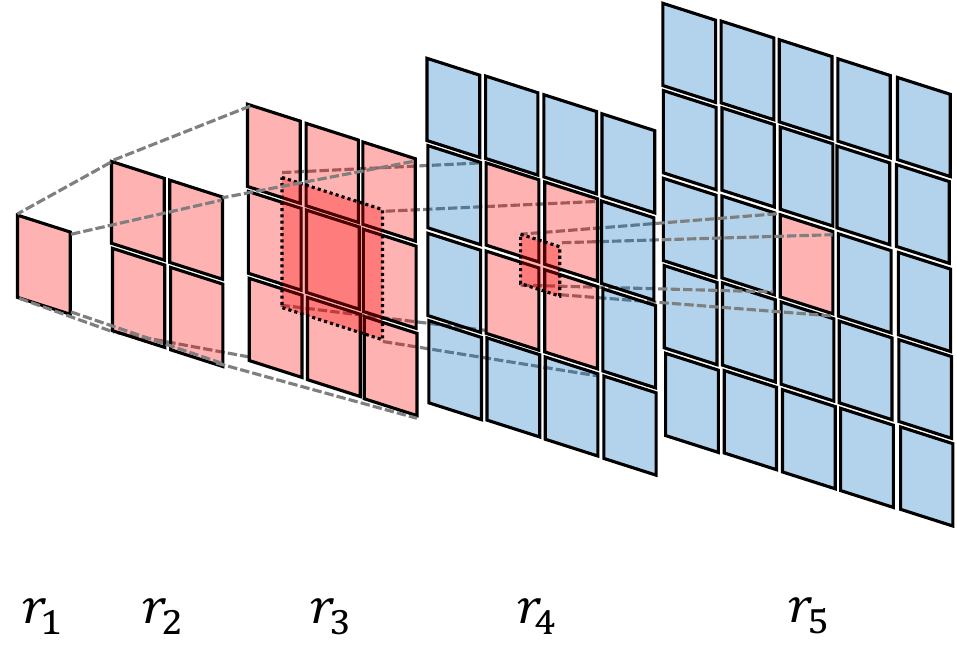}

   \caption{Our token masks are propagated in reverse through the model’s multi-scale hierarchy, moving from finer to coarser feature scales.}
   \label{fig:mask prop}
\end{figure}

To precisely identify which tokens influence the final return, we introduce Mask Propagation. As shown in Figure~\ref{fig:mask prop}, we first construct an initial mask from the output components that directly determine the reward (e.g., predicted bounding boxes). We then propagate this mask backward through the model’s multi-scale hierarchy, moving from finer to coarser feature scales. This process directs updates toward the most relevant tokens while simultaneously improving cross-scale balance.
\section{Experiments}
\label{experiments}
\subsection{Experimental Setup}
We use NextFlow \cite{qu2025tokenflow} as our base model. Images are generated at a resolution of 1024×1024. 


\textbf{Training Setup.}
We adopt an on-policy training strategy with group size $=16$ (candidates per prompt) and batch size $=16$ (prompts per update).
The policy is initialized with a learning rate of $10^{-6}$.
Prompts are sampled from our in-house pool and are strictly disjoint from the training and testing splits of all evaluation tasks.
We train for up to $1{,}200$ updates, i.e., up to $19{,}200$ unique prompts per task.
Optimization uses \textsc{AdamW} with learning rate $10^{-5}$, default $\beta_1 = 0.9$, $\beta_2 = 0.95$, and weight decay $= 0.05$.
We alternate optimization at different scales following Eq.~\eqref{eq:vmr_two_stage_objectives}: for every three prefix GRPO updates (optimizing $\pi_{1:m-1}$), we perform one suffix GRPO update (optimizing $\pi_{m:T-1}$). In contrast to the sampling configuration, classifier-free guidance (CFG)~\cite{ho2022classifier} is not used during training.

\textbf{Sampling Setup.}
We adopt the original sampling strategy of VAR \cite{var} for both rollout and evaluation phases. The sampling parameters are fixed at CFG=5, top-k=2 and top-p=0.9 for all token sampling operations.

\textbf{VMR Reward Construction.}
Instead of fitting a step-wise critic as in PPO~\cite{schulman2017proximal}, we estimate the middle-stage value directly from
on-policy terminal rewards. For a given state $\mathbf{s}_m$, we sample $K$ on-policy rollouts
$\{\tau_k\}_{k=1}^K \sim \pi_{\mathrm{\theta}}(\cdot \mid \mathbf{s}_m)$ up to terminal step $T$, obtain
terminal rewards $\{R^{(k)}(\mathbf{s}_T)\}_{k=1}^K$, and define the VMR as a risk-sensitive estimator:
\begin{equation}
\label{eq:vmr_middle_value_mc}
\widehat{V}_m(\mathbf{s}_m)
\;=\;
\eta \,\log \!\left(
\frac{1}{K} \sum_{k=1}^{K}
\exp\!\Big(\tfrac{1}{\eta} \, R^{(k)}(\mathbf{s}_T)\Big)
\right),
\end{equation}
where $\eta>0$ is a temperature parameter and $K$ is the number of on-policy samples.
In practice, we set $\eta = 1$ and $K = 2$, which we find sufficient and stable (Figure~\ref{fig:motivation_2}).
We primarily validate our method on the text rendering task with an extensive ablation study, and further evaluate on HPSv3 to demonstrate robustness and generalization.

\textbf{Selection of \(m\).}
We observe that the primary gains stem from applying RL to the prefix segment, making the choice of $m$ crucial. As shown in Figure~\ref{fig:motivation_2}, a sweet spot emerges at $m\in\{m_{128},m_{256}\}$, where these values represent the steps corresponding to resolutions $128\times128$ and $256\times256$ (see Table~\ref{tab:patch_num_list}). Further ablations (Subsection~\ref{subsec: ablation}) justify our final choice of $m=m_{256}$.

\subsection{Text Rendering}
\begin{figure}[htbp]
  \centering
  \includegraphics[width=.7\linewidth]{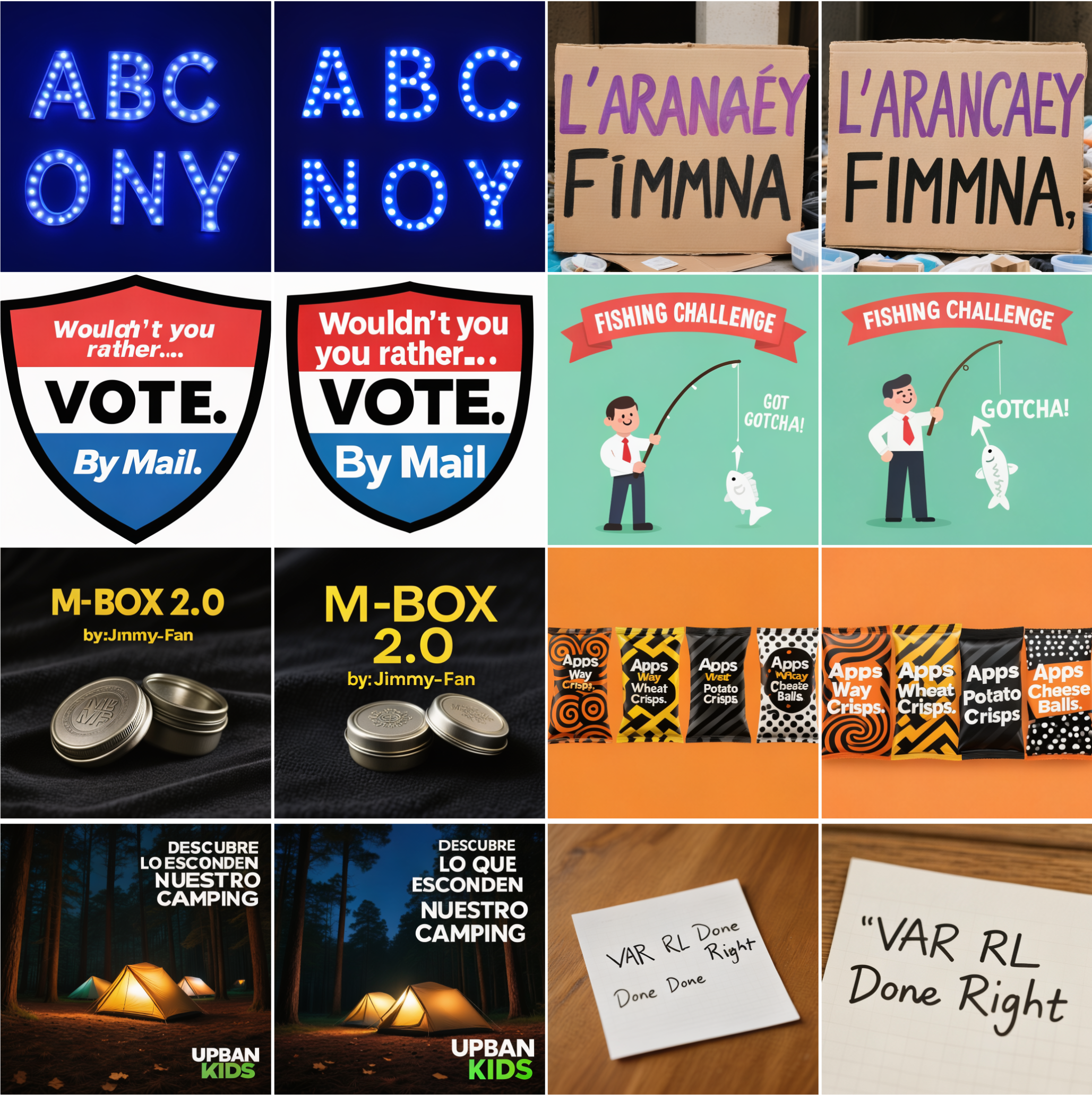}

   \caption{Visual Samples of text rendering task before (left) and after (right) RL optimization. The text required for each pairs: (1) Six illuminated letters ('A', 'B', 'C', \textcolor{red}{'N', 'O'}, 'Y') (2) "Woul\textcolor{red}{dn'}t you rather... VOTE. By Mail." (3) "M-BOX 2.0 by: J\textcolor{red}{im}my-Fan" (4) "DESCUBRE LO \textcolor{red}{QUE} ESCONDEN NUESTRO CAMPING" and "UPBAN KIDS" (5) "L'ARA\textcolor{red}{NCA}EY FIMMNA," (6) "FISHING CHALLENGE" and \textcolor{red}{"GOTCHA!"} (7) "Apps Way \textcolor{red}{Crisps}.", "Apps \textcolor{red}{Wheat} Crisps.", "Apps \textcolor{red}{Potato} Crisps." and "Apps \textcolor{red}{Cheese Balls}." (8) "VAR RL \textcolor{red}{Done Right}". Full captions are provided in the Appendix. The RL-refined outputs demonstrate improvements in correcting character misordering, erroneous glyphs, missing or extraneous characters. Better zoom in for details.}
   \label{fig:ocr_compare}
\end{figure}
\subsubsection{Reward Design}

\textbf{OCR Backbone.}
We use PaddleOCRv5~\cite{cui2025paddleocr30technicalreport,cui2025paddleocrvlboostingmultilingualdocument} to recognize text from rendered images. Let the lowercased ground-truth word sequence be $\mathcal{G}=(g_k)_{k=1}^{N}$, the predicted word sequence be $\mathcal{P}=(p_i)_{i=1}^{M}$, and the corresponding OCR confidences be $\mathcal{S}=(s_i)_{i=1}^{M}$ with $s_i\in[0,1]$. We measure string similarity by a normalized Levenshtein score:
\begin{equation}
\operatorname{LD}(x,y)=1-\frac{\operatorname{EditDist}(x,y)}{\max\{\,|x|,|y|\,\}+\varepsilon},
\label{eq:ld}
\end{equation}
where $|x|$ is the character length of $x$, $\operatorname{EditDist}(\cdot,\cdot)$ is the Levenshtein distance, and $\varepsilon>0$ avoids division by zero.

\noindent
\textbf{Completeness (confidence-aware).} For each $g\!\in\!\mathcal{G}$, if $g$ appears in $\mathcal{P}$, we count the minimum confidence among identical predictions to discourage duplicate inflation:
\begin{equation}
\operatorname{Comp}=\frac{1}{N}\sum_{g\in\mathcal{G}}\Big(\mathbf{1}\{g\in\mathcal{P}\}\cdot \min_{i:\,p_i=g} s_i\Big),
\label{eq:comp}
\end{equation}
where $\mathbf{1}\{\cdot\}$ is the indicator function.

\noindent
\textbf{Length mismatch penalty.} To penalize over/under-generation, we compare concatenated strings via a multiset distance:
\begin{equation}
\operatorname{Pen}=\lambda\cdot
\frac{\operatorname{BagDist}\!\left(\sum_{i=1}^{M} p_i,\ \sum_{k=1}^{N} g_k\right)}
{\left|\sum_{i=1}^{M} p_i\right|+\left|\sum_{k=1}^{N} g_k\right|},
\quad \lambda=0.6,
\label{eq:pen}
\end{equation}
where $\operatorname{BagDist}(\cdot,\cdot)$ measures character-level multiset discrepancy and the denominator normalizes by total character count.

\noindent
\textbf{Similarity (confidence-weighted).} Let $i^\star(g)=\arg\max_{i}\operatorname{LD}(g,p_i)$ be the best match index for $g$; we can weight by the matched confidence:
\begin{equation}
\operatorname{Sim}=\frac{1}{N}\sum_{g\in\mathcal{G}}\operatorname{LD}\!\big(g,p_{i^\star(g)}\big)\, s_{i^\star(g)}
\label{eq:rewardc}
\end{equation}

\noindent
\textbf{Total reward.} The final OCR reward combines the above:
\begin{equation}
\operatorname{Reward}=\operatorname{Comp}+\operatorname{Sim}-\operatorname{Pen}.
\label{eq:reward}
\end{equation}

\subsubsection{Experimental Analysis}
On CVTG-2K~\cite{du2025textcrafter} (Table~\ref{tab:cvtg2k_results_condensed_resized}), our RL-based method (\ALG{}) markedly outperforms NextFlow across all metrics. Specifically, \ALG{} improves Word Accuracy from 0.5536 to 0.7841 (+0.2305 absolute, +41.6\% relative) and NED from 0.7816 to 0.9081 (+0.1265 absolute, +16.2\% relative), while also achieving a higher CLIPScore (0.8224 vs.\ 0.8068). These results demonstrate that \ALG{}, driven by confidence- and similarity-aware OCR rewards with a length penalty, significantly improves word-level accuracy and character-level fidelity while preserving semantic alignment. 

\noindent
Overall, the consistent improvements over NextFlow and the leading results among diffusion models validate the effectiveness of our two-stage RL scheme: optimizing early-token (prefix) decisions guided by confidence/similarity-aware OCR rewards with a length penalty yields robust gains in both text fidelity and visual quality across broad content categories.
\subsubsection{Visual Results}
Figure~\ref{fig:ocr_compare} shows visual results before (left in each pair) and after (right) RL optimization on text rendering. RL-refined outputs correct character order, fix erroneous glyphs, and reduce missing or extraneous characters across diverse layouts and styles. Full prompts are provided in the Appendix.

{
\setlength{\tabcolsep}{22pt} 
\begin{table}[t]
\centering
\caption{Quantitative results on the CVTG-2K dataset. Bold denotes the best performance, underline denotes the second-best for each metric. Seedream 3.0 and GPT Image 1 are highlighted as proprietary/closed-source systems.}
\label{tab:cvtg2k_results_condensed_resized}
\resizebox{\columnwidth}{!}{%
\begin{tabular}{@{}lcccc@{}}
\toprule
\textbf{Model} & \textbf{\#Params} & \textbf{Word Accuracy}$\uparrow$ & \textbf{NED}$\uparrow$ & \textbf{CLIPScore}$\uparrow$ \\
\midrule
FLUX.1 dev \cite{flux} & 12B & 0.4965 & 0.6879 & 0.7401 \\
SD3.5 Large \cite{SD3} & 8B & 0.6548 & 0.8470 & 0.7797 \\
AnyText \cite{tuo2023anytext} & 1.4B & 0.1804 & 0.4675 & 0.7432 \\
TextDiffuser-2 \cite{chen2024textdiffuser} & 8B & 0.2326 & 0.4353 & 0.6765 \\
RAG-Diffusion \cite{li2025ragdiffusion} & 12B & 0.2648 & 0.4498 & 0.6688 \\
3DIS \cite{zhou20243dis} & 12B & 0.3813 & 0.6505 & 0.7767 \\
TextCrafter (FLUX) \cite{du2025textcrafter} & 12B & 0.7370 & 0.8679 & 0.7868 \\
TextCrafter (SD3.5) \cite{du2025textcrafter} & 8B & \underline{0.7600} & \underline{0.9038} & 0.8023 \\
\midrule
Seedream 3.0 \cite{gao2025seedream} & - & 0.5924 & 0.8537 & 0.7821 \\
GPT Image 1 [High] \cite{gptimage} & - & \textbf{0.8569} & \textbf{0.9478} & \underline{0.7982} \\
Qwen-Image \cite{wu2025qwen} & 20B & \underline{0.8288} & \underline{0.9116} & \textbf{0.8017} \\
\midrule
NextFlow & 7B & 0.5536 & 0.7816 & \underline{0.8068} \\
\ALG{}  & 7B & \textbf{0.7841} & \textbf{0.9081} & \textbf{0.8224} \\
\bottomrule
\end{tabular}%
}
\end{table}
}

\subsection{Human Preference Score}
\begin{figure}[htbp]
  \centering
  \includegraphics[width=\linewidth]{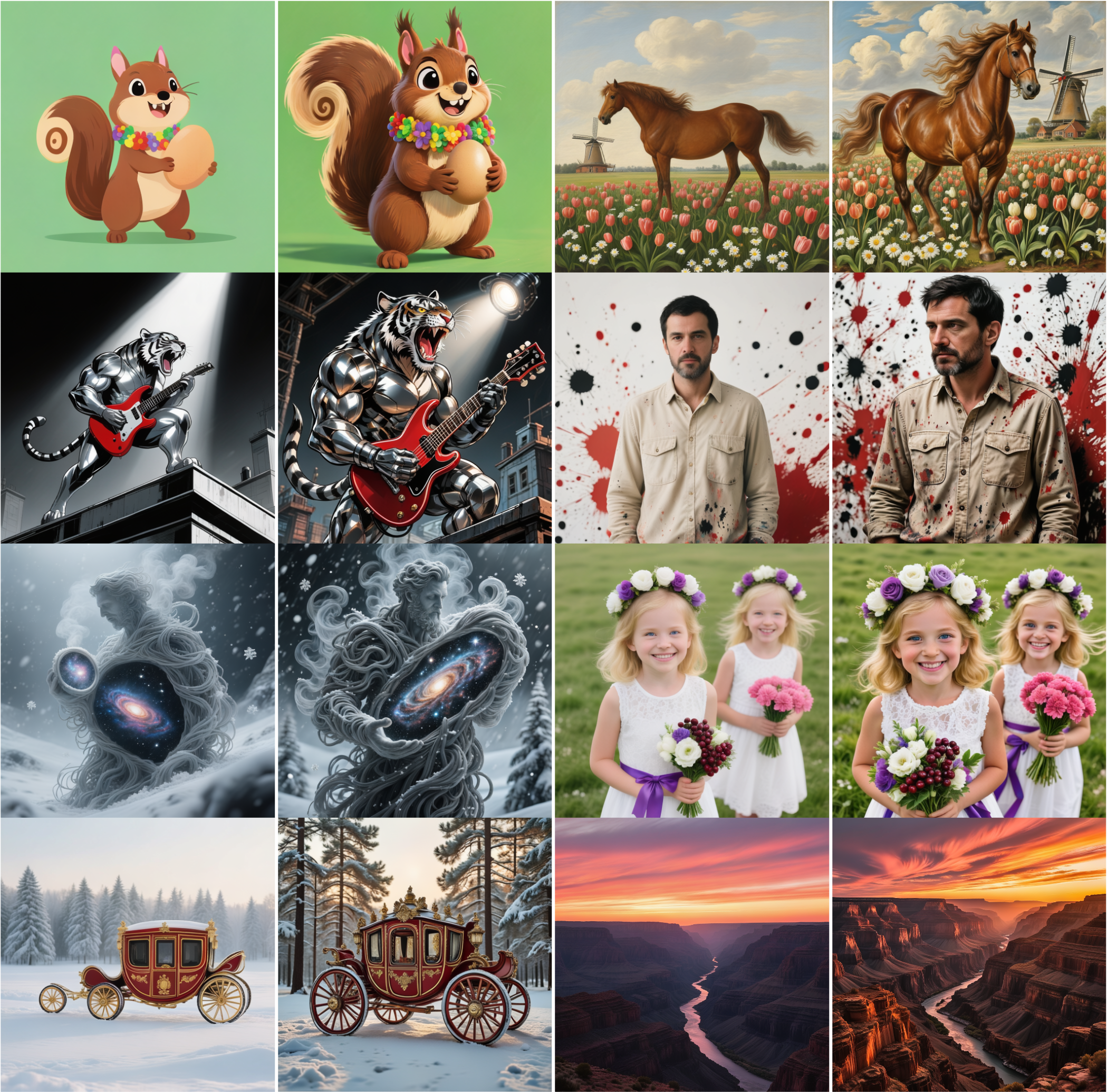}

   \caption{Visual Samples of HPS refine task before (left) and after (right) RL optimization. Full captions are provided in the Appendix. Better zoom in for details.}
   \label{fig:hps_visual}
\end{figure}
\subsubsection{Reward Design}
We adopt the best hyperparameters from the Text Rendering task and use \textbf{HPSv3}~\cite{ma2025hpsv3} \textit{as the direct reward function} during RL optimization. To handle the large model efficiently, we deploy a self-hosted HPS service for low-latency reward evaluation and high-throughput training.

\subsubsection{Experimental Analysis}
Table~\ref{tab:model_comparison_reversed} shows that our method (\ALG{}) delivers substantial gains over the starting point NextFlow across all categories. On the overall metric \textbf{All}, \ALG{} improves from 8.43 to \textbf{10.64} (+2.21 absolute), and consistently outperforms NextFlow in every sub-category, with notable margins on \textbf{Animals} (10.79 vs.\ 8.34), \textbf{Natural Scenery} (10.43 vs.\ 8.22), \textbf{Plants} (10.57 vs.\ 8.19), and \textbf{Food} (11.18 vs.\ 8.93). These results indicate that our RL-based prefix optimization and OCR-guided reward design not only enhance word rendering fidelity but also generalize across diverse visual domains (e.g., transportation, products, and science).

Among diffusion-centric models, \ALG{} attains state-of-the-art performance on multiple key columns: it ranks \textbf{1st} on \textbf{All} (10.64), \textbf{Architecture} (11.16), \textbf{Animals} (10.79), \textbf{Natural Scenery} (10.43), \textbf{Plants} (10.57), \textbf{Food} (11.18), and \textbf{Others} (10.66), while remaining competitive on \textbf{Characters} (11.72, second only to Kolors at 11.79) and \textbf{Design} (9.77, second only to Kolors at 9.87). Compared to strong baselines such as Flux-dev, Kolors, and Playground-v2.5, \ALG{} matches or surpasses their best scores in most categories, establishing a new SOTA within the diffusion-based family.

\subsubsection{Visual Results}
We showcase visual results on the HPS refine task before and after RL optimization in Figure~\ref{fig:hps_visual}. Across diverse prompts and styles, including cartoons, photorealistic portraits, astrophotography, landscapes, and object-centric scenes, our RL-enhanced model produces images with sharper details, cleaner structure, and stronger prompt adherence. Full captions are provided in the Appendix~\ref{sec.prompts}.

\begin{table*}[htbp]
\centering
\caption{Comparison of different models across various categories. The best results are in \textbf{bold}, and the second-best are \underline{underlined}.}
\label{tab:model_comparison_reversed}
\resizebox{\textwidth}{!}{%
\begin{tabular}{@{}l|rrrrrrrrrrrrr@{}}
\toprule
\textbf{Models} & \textbf{All} & \textbf{Characters} & \textbf{Arts} & \textbf{Design} & \textbf{Architecture} & \textbf{Animals} & \textbf{Natural Scenery} & \textbf{Transportation} & \textbf{Products} & \textbf{Plants} & \textbf{Food} & \textbf{Science} & \textbf{Others} \\
\midrule
Stable Diffusion v2.0 \cite{Rombach_2022_CVPR} & -0.24 & -0.34 & -0.56 & -1.35 & -0.24 & -0.54 & -0.32 & 1.00 & 1.11 & -0.01 & -0.38 & -0.38 & -0.84 \\
Stable Diffusion 3 \cite{SD3} & 5.31 & 6.70 & 5.98 & 5.15 & 5.25 & 4.09 & 5.24 & 4.25 & 5.71 & 5.84 & 6.01 & 5.71 & 4.58 \\
Hunyuan \cite{li2024hunyuan} & 8.19 & 7.96 & 8.11 & 8.28 & 8.71 & 7.24 & 7.86 & 8.33 & 8.55 & 8.28 & 8.31 & 8.48 & 8.20 \\
Stable Diffusion XL \cite{podell2023sdxl} & 8.20 & 8.67 & 7.63 & 7.53 & 8.57 & 8.18 & 7.76 & 8.65 & 8.85 & 8.32 & 8.43 & 8.78 & 7.29 \\
Gemini 2.0 Flash \cite{comanici2025gemini} & 9.21 & 9.98 & 8.44 & 7.64 & 10.11 & 9.42 & 9.01 & 9.74 & 9.64 & 9.55 & 10.16 & 7.61 & 9.23 \\
PixArt-$\Sigma$ \cite{chen2024pixart} & 9.37 & 10.08 & 9.07 & 8.41 & 9.83 & 8.86 & 8.87 & 9.44 & 9.57 & 9.52 & 9.73 & 10.35 & 8.58 \\
CogView4 \cite{zheng2024cogview3} & 9.61 & 10.72 & 9.86 & 9.33 & 9.88 & 9.16 & 9.45 & 9.69 & 9.86 & 9.45 & 9.49 & 10.16 & 8.97 \\
Infinity \cite{han2025infinity} & 10.26 & 11.17 & 9.95 & 9.43 & 10.36 & 9.27 & \underline{10.11} & 10.36 & 10.59 & 10.08 & 10.30 & 10.59 & \underline{9.62} \\
Playground-v2.5 \cite{li2024playground} & 10.27 & 11.07 & 9.84 & 9.64 & 10.45 & 10.38 & 9.94 & 10.51 & 10.62 & 10.15 & 10.62 & 10.84 & 9.39 \\
Flux-dev \cite{flux} & 10.43 & 11.70 & \underline{10.32} & 9.39 & \underline{10.93} & 10.38 & 10.01 & \textbf{10.84} & \textbf{11.24} & 10.21 & 10.38 & \textbf{11.24} & 9.16 \\
Kolors \cite{kolors} & \underline{10.55} & \textbf{11.79} & \textbf{10.47} & \textbf{9.87} & 10.82 & \underline{10.60} & 9.89 & 10.68 & \underline{10.93} & \underline{10.50} & \underline{10.63} & \underline{11.06} & 9.51 \\
\midrule
NextFlow  & 8.43 & 9.27 & 8.00 & 7.51 & 8.98 & 8.34 & 8.22 & 8.68 & 8.70 & 8.19 & 8.93 & 7.75 & 8.57 \\
\ALG{}  & \textbf{10.64} & \underline{11.72} & 10.26 & \underline{9.77} & \textbf{11.16} & \textbf{10.79} & \textbf{10.43} & \underline{10.77} & 10.73 & \textbf{10.57} & \textbf{11.18} & 9.60 & \textbf{10.66} \\
\bottomrule
\end{tabular}%
}
\end{table*}

\subsection{Ablation}
\label{subsec: ablation}

We conduct ablations over key design choices and hyperparameter, summarizing findings below.


\begin{table}[t] 
    \centering
    \caption{Ablation studies on the hyperparameters $m$ and $\alpha$.} 
    \label{tab:main_ablation}

    \begin{subtable}[t]{0.48\textwidth}
        \centering
        \caption{Ablation on the choice of $m$, with $K=2$ at step 300.}
        \label{tab:ablation_m_selection}
        \resizebox{\columnwidth}{!}{%
            \begin{tabular}{@{}lccc@{}}
            \toprule
            \textbf{$m$} & \textbf{Word Accuracy}$\uparrow$ & \textbf{NED}$\uparrow$ & \textbf{CLIPScore}$\uparrow$ \\
            \midrule
            NextFlow & 0.5536 & 0.7816 & 0.8068 \\
            \hdashline
            1024   & 0.5741 & 0.7871 & 0.8038 \\ 
            512   & 0.6351 & 0.8236 & 0.8057 \\
            256  & \underline{0.6565} & \underline{0.8429} & \underline{0.8133} \\
            128  & \textbf{0.6677} & \textbf{0.8501} & \textbf{0.8142} \\
            \bottomrule
            \end{tabular}%
        }
    \end{subtable}
    \hfill 
    \begin{subtable}[t]{0.48\textwidth}
        \centering
        \caption{Ablation on decay exponent $\alpha$, measured at step 400.}
        \label{tab:ablation_reweight}
        \resizebox{\columnwidth}{!}{%
            \begin{tabular}{@{}lccc@{}}
            \toprule
            \textbf{$\alpha$} & \textbf{Word Accuracy}$\uparrow$ & \textbf{NED}$\uparrow$ & \textbf{CLIPScore}$\uparrow$ \\
            \midrule
            NextFlow & 0.5536 & 0.7816 & 0.8068 \\
            \hdashline
            1.2 & 0.6602 & 0.8459 & 0.8144 \\
            1.0 & 0.6855 & 0.8601 & 0.8188 \\
            0.8 & \underline{0.7051} & \underline{0.8673} & \textbf{0.8172} \\
            0.6 & \textbf{0.7136} & \textbf{0.8709} & \underline{0.8165} \\
            \bottomrule
            \end{tabular}%
        }
    \end{subtable}
\end{table}

\begin{table}[htbp]
    \centering
    \caption{Ablation studies on Mask Propagation (MP) and alternating training schemes.} 
    \label{tab:main_ablation_2}
    \begin{subtable}[t]{0.48\textwidth}
        \centering
        \caption{Effect of MP, measured at step 400.}
        \label{tab:ablation_mask}
        \resizebox{\columnwidth}{!}{%
            \begin{tabular}{@{}lccc@{}}
            \toprule
            \textbf{Model} & \textbf{Word Accuracy}$\uparrow$ & \textbf{NED}$\uparrow$ & \textbf{CLIPScore}$\uparrow$ \\
            \midrule
            NextFlow & 0.5536 & 0.7816 & 0.8068 \\
            \hdashline
            \ALG{} w/o MP & 0.6855 & 0.8601 & \textbf{0.8188} \\
            \ALG{} w/ MP & \textbf{0.7071} & \textbf{0.8699} & 0.8184 \\
            \bottomrule
            \end{tabular}%
        }
    \end{subtable}
    \hfill 
    \begin{subtable}[t]{0.48\textwidth}
        \centering
        \caption{Ablation on alternating training schemes.} 
        \label{tab:ablation_switch}
        \resizebox{\columnwidth}{!}{%
            \begin{tabular}{@{}lccc@{}}
            \toprule
            \textbf{Scheme} & \textbf{Word Accuracy}$\uparrow$ & \textbf{NED}$\uparrow$ & \textbf{CLIPScore}$\uparrow$ \\ 
            \midrule
            NextFlow & 0.5536 & 0.7816 & 0.8068 \\
            \hdashline
            Fine-grained & \textbf{0.6855} & \textbf{0.8601} & \textbf{0.8188} \\
            Coarse-grained & 0.6778 & 0.8564 & 0.8168 \\
            \bottomrule
            \end{tabular}%
        }
    \end{subtable}
\end{table}

\textbf{Selection of $m$ in Equation~\eqref{eq:vmr_middle_value_mc}.}
From Table~\ref{tab:ablation_m_selection}, $m{=}m_{128}$ achieves the best scores (Word Acc.\ 0.6677, NED 0.8501, CLIPScore 0.8142), while $m{=}m_{256}$ is very close (0.6565/0.8429/0.8133). Given comparable quality but lower compute to estimate the VMR and better compatibility with the mask mechanism, we select $m{=}m_{256}$ as the default. Larger $m$ (e.g., 512, 1024) underperform, indicating the benefit of placing the VMR earlier to reduce variance and improve early credit assignment.



\textbf{Decay exponent $\alpha$ in Equation~\eqref{equ: coeff}.}
As shown in Table~\ref{tab:ablation_reweight} and Figure~\ref{fig:ablation_plots:a}, $\alpha \in [0.6, 0.8]$ yields the strongest results, with $\alpha{=}0.6$ giving the best Word Accuracy/NED and $\alpha{=}0.8$ the top CLIPScore. This range provides a robust default, balancing gradient normalization across heterogeneous steps without over-suppressing high-resolution updates.


\textbf{Mask Propagation (MP).}
Table~\ref{tab:ablation_mask} and Figure~\ref{fig:ablation_plots:b} show that enabling MP improves text fidelity metrics (Word Acc.\ 0.7071 vs.\ 0.6855; NED 0.8699 vs.\ 0.8601), with essentially unchanged CLIPScore. This confirms MP sharpens spatiotemporal credit assignment and is beneficial at scale.
\begin{table}[htbp]
\centering
\caption{Ablation on $K$, evaluated at step 300.}
\label{tab:ablation_samples}
\resizebox{.48\columnwidth}{!}{%
\begin{tabular}{@{}lccc@{}}
\toprule
\textbf{$K$} & \textbf{Word Accuracy}$\uparrow$ & \textbf{NED}$\uparrow$ & \textbf{CLIPScore}$\uparrow$ \\
\midrule
NextFlow & 0.5536 & 0.7816 & 0.8068 \\
\hdashline
1 & 0.6575 & 0.8388 & 0.8258 \\
2 & \textbf{0.6821} & \textbf{0.8505} & \textbf{0.8287} \\
4 & 0.6720 & 0.8449 & 0.8278 \\
\bottomrule
\end{tabular}%
}
\end{table}
\begin{figure}[htbp]
  \centering
  \begin{subfigure}[t]{0.5\linewidth}
    \centering
    \includegraphics[width=\linewidth]{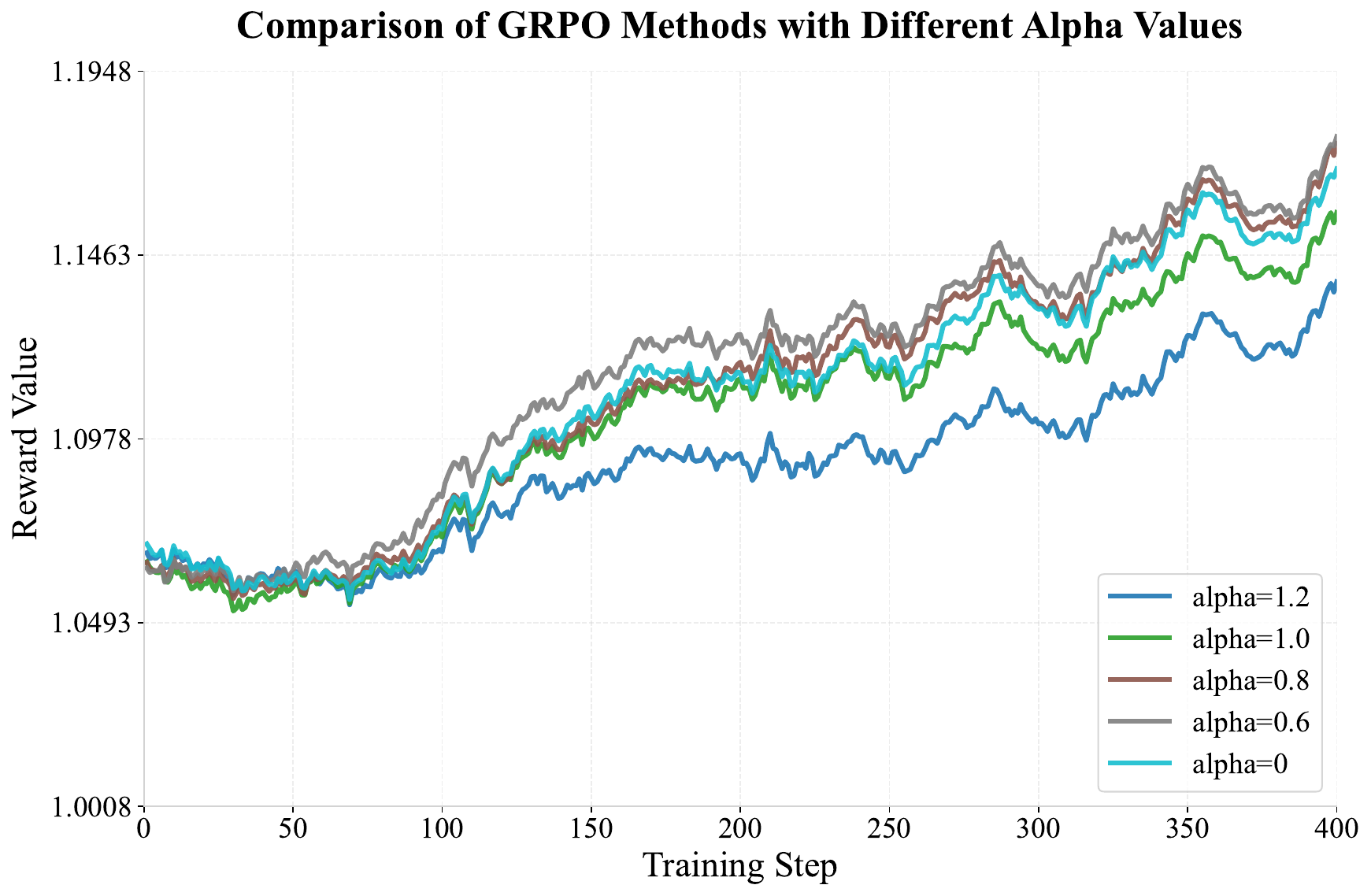}
    \caption{Decay exponent $\alpha$ comparison.}
    \label{fig:ablation_plots:a}
  \end{subfigure}\hfill
  \begin{subfigure}[t]{0.5\linewidth}
    \centering
    \includegraphics[width=\linewidth]{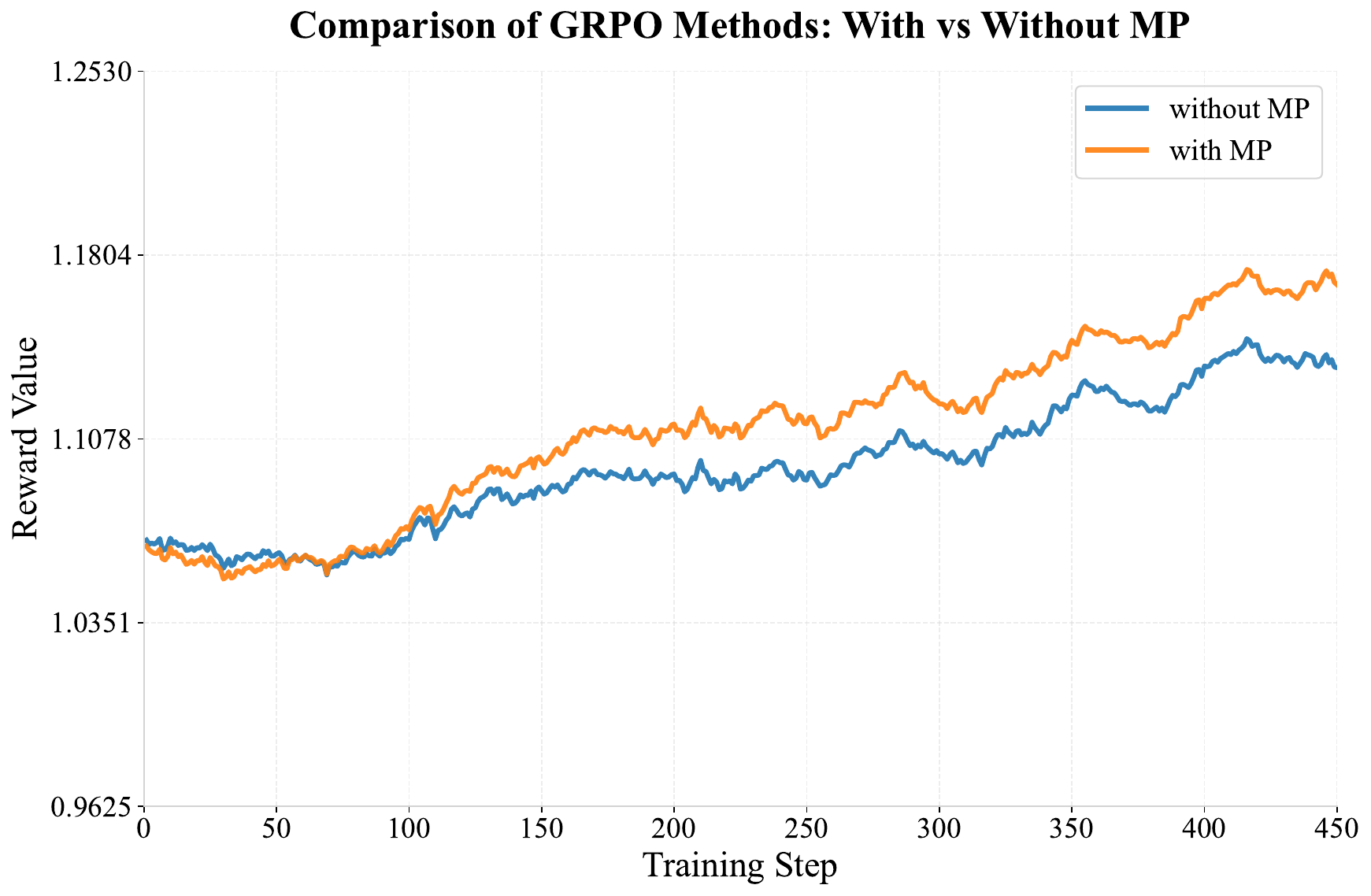}
    \caption{Effect of Mask Propagation (MP).}
    \label{fig:ablation_plots:b}
  \end{subfigure}
  \caption{Ablations on training configurations. Left: varying decay exponent $\alpha$; Right: enabling/disabling MP.}
  \label{fig:ablation_plots}
\end{figure}


\textbf{Number of samples $K$ in Equation~\eqref{eq:vmr_middle_value_mc}.}
From Table~\ref{tab:ablation_samples}, $K{=}2$ performs best overall (0.6821/0.8505/0.8287). While $K{=}1$ underutilizes exploration, larger $K$ (e.g., $4$) shows slight degradation, suggesting compatibility issues with MP and increased variance from heterogeneous trajectories. Thus, $K{=}2$ offers the best stability–performance trade-off.

\textbf{Alternating training granularity.}
Table~\ref{tab:ablation_switch} compares fine-grained versus coarse-grained alternation between low-scale (prefix) and high-scale (suffix) segments. In the fine-grained scheme, we perform three prefix updates for every suffix update; in the coarse-grained scheme, we first run 300 prefix updates and then 100 suffix updates. Both settings are evaluated at the same total training step (400). Fine-grained alternation consistently outperforms coarse-grained (0.6855/0.8601/0.8188 vs.\ 0.6778/0.8564/0.8168), indicating that more frequent, localized updates better resolve asynchronous policy conflicts.

\section{Conclusion}
We present the first RL framework for VAR in T2I task, tackling asynchronous policy conflicts across heterogeneous scales. Our VMR decomposes the full-horizon objective into prefix/suffix stages without altering the optimal solution, while Per-Action Normalization Weighting and Mask Propagation stabilize credit assignment and concentrate updates on reward-relevant tokens. Empirically, our method delivers consistent gains over vanilla GRPO with strong improvements over original NextFlow.

\clearpage

\bibliographystyle{plainnat}
\bibliography{main}

\clearpage
\onecolumn
\beginappendix

\section{Details About NextFlow}
\subsection{Architecture}
The NextFlow main Transformer is initialized from Qwen2.5-VL-7B~\cite{Qwen2.5-VL} and augmented with a newly introduced visual codebook and a revised logits-prediction head.

\subsection{Vision Generation}
For image synthesis, the model first emits the special token $\langle\text{boi}\rangle$ (begin-of-image) and then switches to full attention, operating in the style of VAR~\cite{var}. Scale-specific configurations are provided in Table~\ref{tab:patch_num_list}.

\begin{table*}[h]
\centering
\setlength{\tabcolsep}{2pt}
\captionsetup{skip=5pt} 
\caption{Related tokens corresponding to different prefix resolutions are provided. Note that the method also supports varying aspect ratios.}
\label{tab:patch_num_list}
\resizebox{.99\linewidth}{!}{
\begin{tabular}{l c c c c c c c c c c c c c c c c c c c}
\toprule
Resolution & \multicolumn{18}{c}{Related Schedule} \\
\midrule
64$\times$64 & (1,1) & (2,2) & (3,3) & (4,4) &  &  &  &  & & & & & & &  &  &  &  \\
128$\times$128 & (1,1) & (2,2) & (3,3) & (4,4) & (5,5) & (6,6) & (7,7) & (8,8) & & & & & & &  &  &  &  \\
256$\times$256 & (1,1) & (2,2) & (3,3) & (4,4) & (5,5) & (6,6) & (7,7) & (8,8) & (10,10) & (12,12) & (14,14) & (16,16) & & &  &  &  &  \\
512$\times$512 & (1,1) & (2,2) & (3,3) & (4,4) & (5,5) & (6,6) & (7,7) & (8,8) & (10,10) & (12,12) & (14,14) & (16,16) & (20,20) & (24,24) & (28,28) & (32,32) &  &  \\
1024$\times$1024 & (1,1) & (2,2) & (3,3) & (4,4) & (5,5) & (6,6) & (7,7) & (8,8) & (10,10) & (12,12) & (14,14) & (16,16) & (20,20) & (24,24) & (28,28) & (32,32) & (48,48) & (64,64) \\
\bottomrule
\end{tabular}
}
\end{table*}

\section{Captions}
\label{sec.prompts}
The detailed prompts for Fig. \ref{fig:ocr_compare} and Fig. \ref{fig:hps_visual} is shown in Tab. \ref{tab:long_prompts_hps} and Tab. \ref{tab:long_prompts_ocr}.

\section{Distribution Gap between the Training and Evaluation Datasets}
Our in-house training corpus exhibits a pronounced imbalance in the \textit{number of text-rendering regions} per sample: an overabundance of single-region cases and a long tail of images containing more than five regions. To mitigate this skew, we adopt a region-count–based filtering strategy for the training data. For analysis and visualization, we discretize the per-sample region count into six categories—1, 2, 3, 4, 5, and $>$5—using the $>$5 bin to summarize the long tail. The evaluation set follows a target profile restricted to bins 2–5 with probabilities 0.2, 0.3, 0.3, and 0.2, respectively. Fig.~\ref{fig:ocr-rendering-dataset} compares the empirical distributions for the pre-filter training set, the post-filter training set, and the evaluation set. Importantly, this filtering does not constitute evaluation hacking or test-set leakage; rather, it calibrates the training distribution to the intended task difficulty.

\begin{figure}[t]
  \centering
  \includegraphics[width=.82\linewidth]{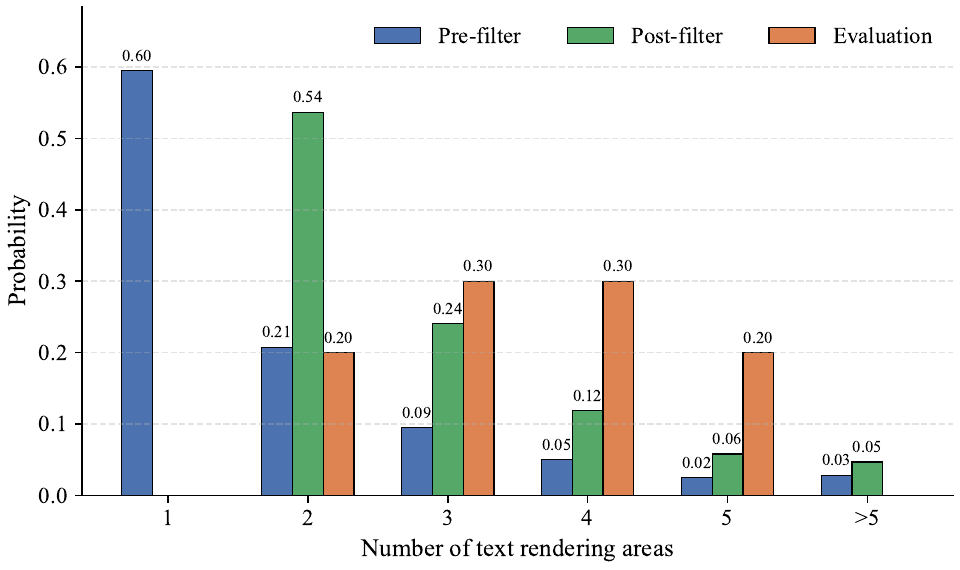}
  \caption{Distributions over the number of text-rendering regions (1, 2, 3, 4, 5, and $>$5) for the pre-filter training set, the post-filter training set, and the evaluation set. The evaluation distribution is constrained to bins 2–5 with probabilities 0.2, 0.3, 0.3, and 0.2. }
  \label{fig:ocr-rendering-dataset}
\end{figure}


\section{More Visual Results}
The visual samples from the HPSv3 evaluation dataset for each class are shown in Fig.~\ref{fig:more_visual} and Fig.~\ref{fig:more_visual_2}.

\begin{figure}[t]
  \centering
  \includegraphics[width=.82\linewidth]{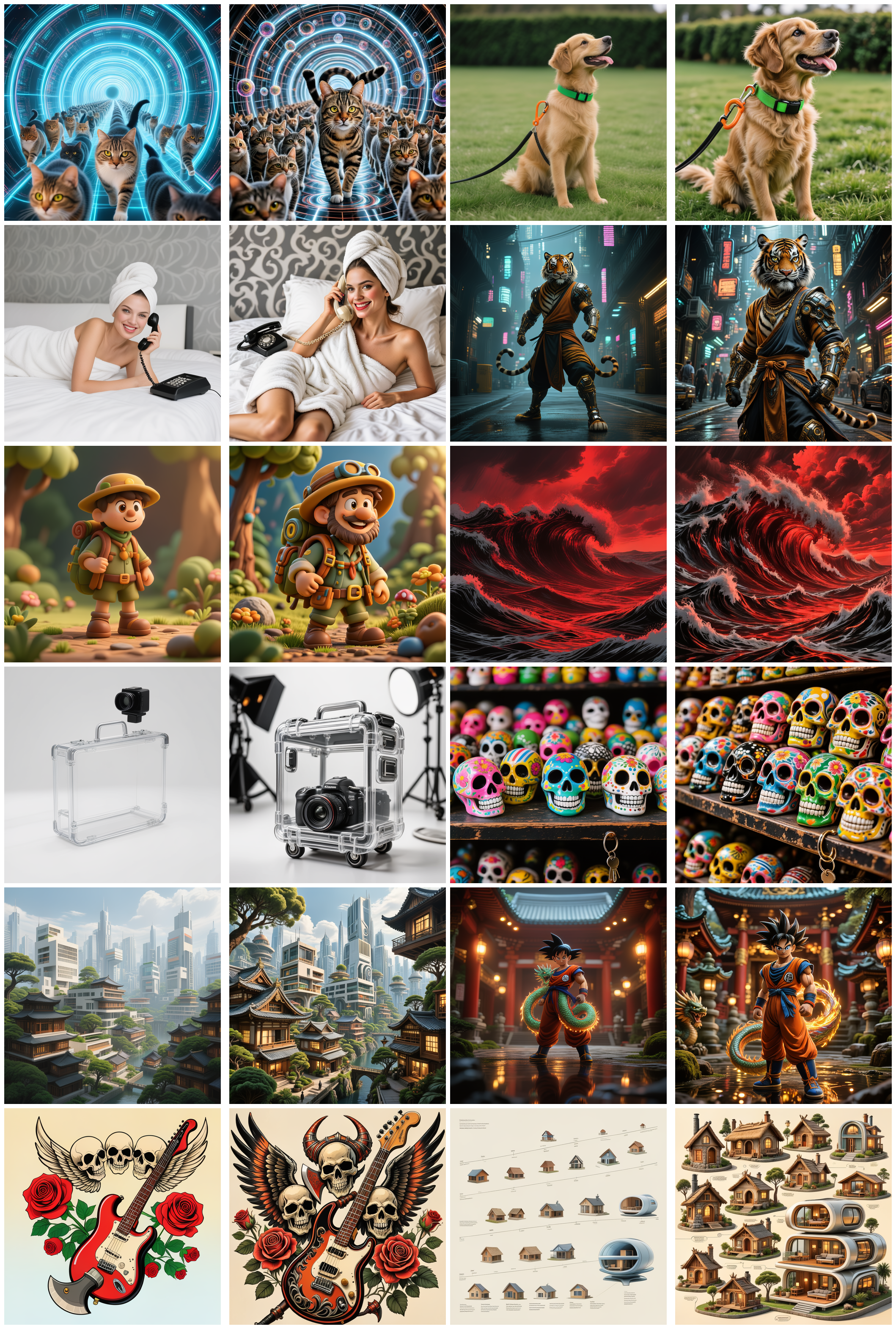}

   \caption{Random visual samples from the HPSv3 evaluation set by class. Each class includes two pairs; within each pair, the left image is before RL and the right image is after RL.}
   \label{fig:more_visual}
\end{figure}

\begin{figure}[t]
  \centering
  \includegraphics[width=.82\linewidth]{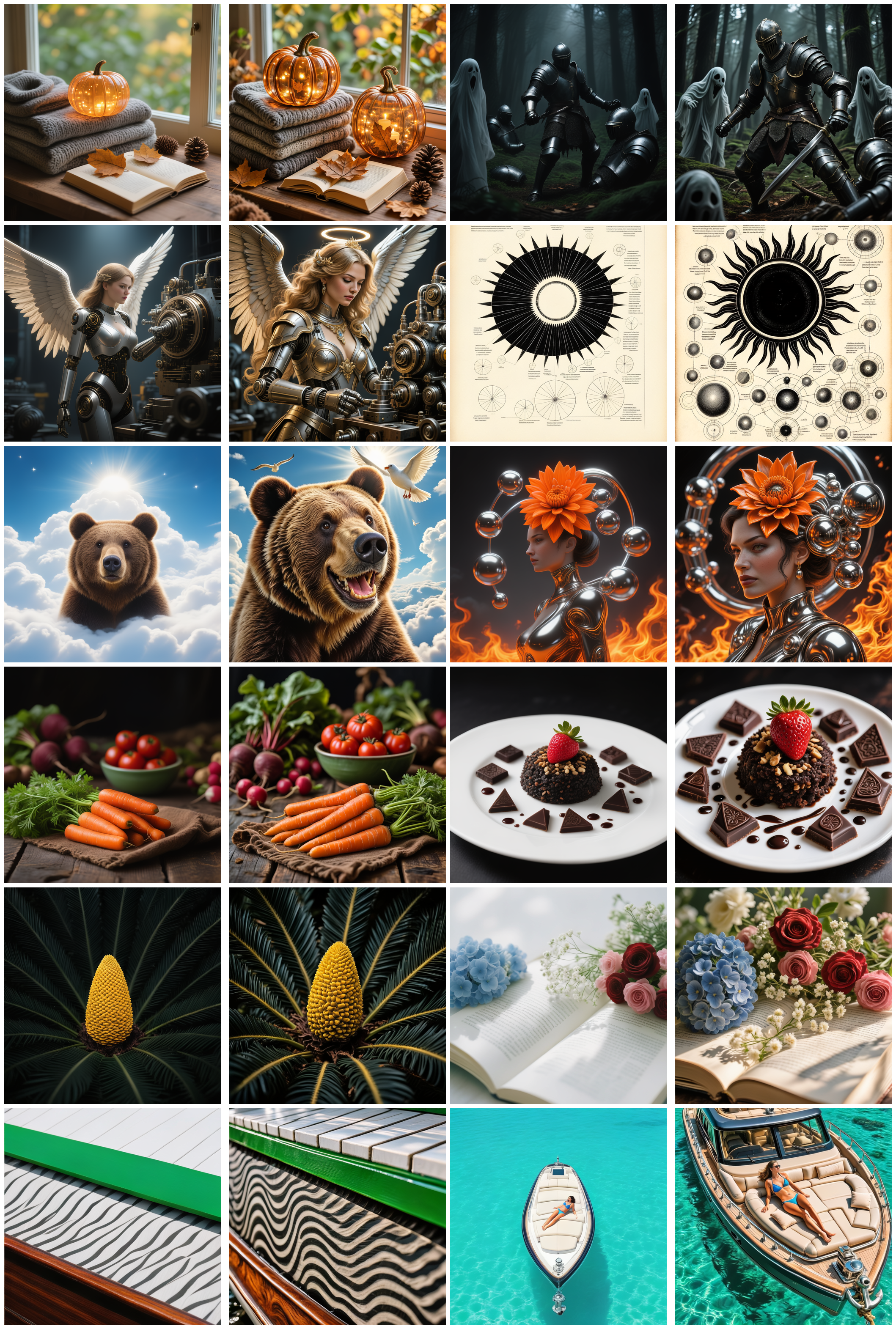}

   \caption{Random visual samples from the HPSv3 evaluation set by class. Each class includes two pairs; within each pair, the left image is before RL and the right image is after RL.}
   \label{fig:more_visual_2}
\end{figure}

\section{Proofs}
\subsection{Proof of Theorem~\ref{thm:reverse_kl_projection}}
\thmrklproj*
\begin{proof}
We first derive the trajectory-level variational identity, then
identify the global soft-optimal policy $\pi^\ast$ and its associated
$Q^\ast$, and finally show that the constrained optimal policy
$\pi^\dagger$ is a reverse-KL projection of $\pi^\ast$ at each state.

\textbf{1. Trajectory-level formulation of $J(\pi)$.}
Let a trajectory be
\begin{equation}
\tau = (\mathbf{s}_1,\mathbf{a}_1,\dots,\mathbf{s}_{T}),
\end{equation}
and denote by $p_{\pi}(\tau)$ the trajectory distribution induced by a policy
$\pi$ together with the environment dynamics. For a fixed
reference policy $\pi_{\mathrm{old}}$, let $p_{\mathrm{old}}(\tau)$ be the
corresponding trajectory distribution.

The KL-regularized objective is
\begin{align}
J(\pi)
&=
\mathbb{E}_{p_{\pi}(\tau)}\!\left[ R(\mathbf{s}_T) \right]
-
\eta\, \mathrm{KL}\!\left( p_{\pi}(\tau) \big\| p_{\mathrm{old}}(\tau) \right)
\\
&=
\mathbb{E}_{p_{\pi}(\tau)}\!\left[ R(\mathbf{s}_T) \right]
-
\eta\,
\mathbb{E}_{p_{\pi}(\tau)}\!\left[
\log \frac{p_{\pi}(\tau)}{p_{\mathrm{old}}(\tau)}
\right].
\end{align}
Using the factorization
\begin{equation}
p_{\pi}(\tau)
=
p(\mathbf{s}_1)\prod_{t=1}^{T-1}
\pi(\mathbf{a}_t\mid \mathbf{s}_t)\,
p(\mathbf{s}_{t+1}\mid \mathbf{s}_t,\mathbf{a}_t),
\end{equation}
and similarly for $p_{\mathrm{old}}(\tau)$ with $\pi_{\mathrm{old}}$, the
transition dynamics and initial-state distribution cancel in the ratio, giving
\begin{equation}
\log\frac{p_{\pi}(\tau)}{p_{\mathrm{old}}(\tau)}
=
\sum_{t=1}^{T-1}
\log\frac{\pi(\mathbf{a}_t\mid \mathbf{s}_t)}{\pi_{\mathrm{old}}(\mathbf{a}_t\mid \mathbf{s}_t)}.
\end{equation}
Thus
\begin{equation}
\label{eq:J_traj_form}
J(\pi)
=
\mathbb{E}_{p_{\pi}(\tau)}\!\left[
R(\mathbf{s}_T)
- \eta \sum_{t=1}^{T-1}
\log \frac{\pi(\mathbf{a}_t\mid \mathbf{s}_t)}{\pi_{\mathrm{old}}(\mathbf{a}_t\mid \mathbf{s}_t)}
\right].
\end{equation}

\textbf{2. Variational identity and global soft-optimal trajectory distribution.}
Define an unnormalized density over trajectories
\begin{equation}
\tilde{p}(\tau)
=
p_{\mathrm{old}}(\tau)
\exp\!\left(
\tfrac{1}{\eta} R(\mathbf{s}_T)
\right),
\end{equation}
with partition function
\begin{equation}
Z = \int \tilde{p}(\tau)\, d\tau
=
\mathbb{E}_{p_{\mathrm{old}}(\tau)}\!\left[\exp\!\left(\tfrac{1}{\eta} R(\mathbf{s}_T)\right)\right].
\end{equation}
Define the normalized distribution
\begin{equation}
p^\ast(\tau)
=
\frac{1}{Z}\tilde{p}(\tau)
=
\frac{1}{Z}
p_{\mathrm{old}}(\tau)\exp\!\left(\tfrac{1}{\eta} R(\mathbf{s}_T)\right).
\end{equation}

We now relate $J(\pi)$ to the KL divergence
$\mathrm{KL}(p_{\pi}(\tau)\,\|\,p^\ast(\tau))$.
Compute
\begin{align}
\mathrm{KL}\!\left(p_{\pi}(\tau)\,\big\|\,p^\ast(\tau)\right)
&=
\int p_{\pi}(\tau)
\log\frac{p_{\pi}(\tau)}{p^\ast(\tau)}\,d\tau
\\
&=
\int p_{\pi}(\tau)
\left[
\log\frac{p_{\pi}(\tau)}{p_{\mathrm{old}}(\tau)}
-
\tfrac{1}{\eta} R(\mathbf{s}_T)
+
\log Z
\right] d\tau
\\
&=
\mathbb{E}_{p_{\pi}}\!\left[
\log\frac{p_{\pi}(\tau)}{p_{\mathrm{old}}(\tau)}
\right]
-
\tfrac{1}{\eta}\mathbb{E}_{p_{\pi}}\!\left[R(\mathbf{s}_T)\right]
+
\log Z.
\end{align}
Multiplying both sides by $\eta$ gives
\begin{equation}
\label{eq:eta_KL_relation}
\eta\,\mathrm{KL}\!\left(p_{\pi}(\tau)\,\big\|\,p^\ast(\tau)\right)
=
\eta\,\mathrm{KL}\!\left(p_{\pi}(\tau)\,\big\|\,p_{\mathrm{old}}(\tau)\right)
-
\mathbb{E}_{p_{\pi}}[R(\mathbf{s}_T)]
+
\eta\log Z.
\end{equation}
Rearranging \eqref{eq:eta_KL_relation} yields the trajectory-level
variational identity:
\begin{equation}
\label{eq:traj_variational_identity}
\mathbb{E}_{p_{\pi}}[R(\mathbf{s}_T)]
-
\eta\, \mathrm{KL}\!\left( p_{\pi}(\tau) \big\| p_{\mathrm{old}}(\tau)\right)
=
\eta \log Z
-
\eta\,
\mathrm{KL}\!\left( p_{\pi}(\tau) \big\| p^\ast(\tau)\right).
\end{equation}
Thus
\begin{equation}
J(\pi)
=
\eta \log Z
-
\eta\,\mathrm{KL}\!\left( p_{\pi}(\tau) \big\| p^\ast(\tau)\right),
\end{equation}
and $J(\pi)$ is maximized exactly when $p_{\pi}(\tau) = p^\ast(\tau)$.

\textbf{3. Factorization of $p^\ast(\tau)$ and the soft-optimal policy.}
Since $p_{\mathrm{old}}(\tau)$ is induced by the Markov policy
$\pi_{\mathrm{old}}$ and the environment dynamics, it factorizes as
\begin{equation}
p_{\mathrm{old}}(\tau)
=
p(\mathbf{s}_1)
\prod_{t=1}^{T-1}
\pi_{\mathrm{old}}(\mathbf{a}_t\mid \mathbf{s}_t)\,
p(\mathbf{s}_{t+1}\mid \mathbf{s}_t,\mathbf{a}_t).
\end{equation}
Hence
\begin{equation}
\label{eq:pstar_factor}
p^\ast(\tau)
=
\frac{1}{Z}
p(\mathbf{s}_1)
\prod_{t=1}^{T-1}
\pi_{\mathrm{old}}(\mathbf{a}_t\mid \mathbf{s}_t)\,
p(\mathbf{s}_{t+1}\mid \mathbf{s}_t,\mathbf{a}_t)\,
\exp\!\left(\tfrac{1}{\eta} R(\mathbf{s}_T)\right).
\end{equation}
This defines a Markov trajectory distribution; thus there exists a policy
$\pi^\ast$ such that
\begin{equation}
p^\ast(\tau)
=
p(\mathbf{s}_1)
\prod_{t=1}^{T-1}
\pi^\ast(\mathbf{a}_t\mid \mathbf{s}_t)\,
p(\mathbf{s}_{t+1}\mid \mathbf{s}_t,\mathbf{a}_t).
\end{equation}

We now compute $\pi^\ast(\mathbf{a}_t\mid \mathbf{s}_t)$ explicitly.
Fix a time $t$ and a state $\mathbf{s}_t$. Using \eqref{eq:pstar_factor},
the conditional distribution over $\mathbf{a}_t$ given $\mathbf{s}_t$ under
$p^\ast$ is proportional to the joint density over all trajectories sharing
$(\mathbf{s}_t,\mathbf{a}_t)$:
\begin{align}
\pi^\ast(\mathbf{a}_t\mid \mathbf{s}_t)
&=
p^\ast(\mathbf{a}_t\mid \mathbf{s}_t)
\propto
\sum_{\text{trajectories consistent with }(\mathbf{s}_t,\mathbf{a}_t)}
p^\ast(\tau)
\\
&\propto
\sum_{\text{futures}} p_{\mathrm{old}}(\tau)\,
\exp\!\left(\tfrac{1}{\eta} R(\mathbf{s}_T)\right),
\end{align}
where the sum is over the future part of the trajectory (from $t$ onward),
and the past part $\mathbf{s}_{1:t-1},\mathbf{a}_{1:t-1}$ only contributes a
constant factor w.r.t.\ $\mathbf{a}_t$. More precisely, for fixed
$\mathbf{s}_t$ we have
\begin{align}
\pi^\ast(\mathbf{a}_t\mid \mathbf{s}_t)
&\propto
\pi_{\mathrm{old}}(\mathbf{a}_t\mid \mathbf{s}_t)
\sum_{\mathbf{s}_{t+1:T},\mathbf{a}_{t+1:T-1}}
\left[
\prod_{k=t}^{T-1}
p(\mathbf{s}_{k+1}\mid \mathbf{s}_k,\mathbf{a}_k)
\pi_{\mathrm{old}}(\mathbf{a}_{k+1}\mid \mathbf{s}_{k+1})
\right]
\exp\!\left(\tfrac{1}{\eta} R(\mathbf{s}_T)\right)
\\
&=
\pi_{\mathrm{old}}(\mathbf{a}_t\mid \mathbf{s}_t)\,
\mathbb{E}_{\pi_{\mathrm{old}}}
\left[
\exp\!\left(\tfrac{1}{\eta} R(\mathbf{s}_T)\right)
\;\middle|\;
\mathbf{s}_t,\mathbf{a}_t
\right].
\end{align}
Thus we obtain the \textit{general optimal solution}
\begin{equation}
\pi^\ast(\mathbf{a}_t \mid \mathbf{s}_t)
\propto
\pi_{\mathrm{old}}(\mathbf{a}_t \mid \mathbf{s}_t)
\exp\left(\frac{1}{\eta} Q^\ast(\mathbf{s}_t, \mathbf{a}_t)\right),
\end{equation}
where we have defined the optimal soft $Q$-function as
\begin{equation}
Q^\ast(\mathbf{s}_t, \mathbf{a}_t)
=
\eta \ln \mathbb{E}_{\pi_{\mathrm{old}}}
\left[
\exp\left(\frac{1}{\eta} R(\mathbf{s}_T)\right)
\;\middle|\;
\mathbf{s}_t, \mathbf{a}_t
\right].
\end{equation}
Equations \eqref{general optimal solution} and \eqref{easy_optimal_Q} fully
characterize the global soft-optimal policy $\pi^\ast$.

\textbf{4. Constrained optimum as trajectory-level reverse-KL projection.}
By the variational identity \eqref{eq:traj_variational_identity}, for any
policy $\pi$,
\begin{equation}
J(\pi)
=
\eta \log Z
-
\eta\,
\mathrm{KL}\!\left( p_{\pi}(\tau) \big\| p^\ast(\tau)\right).
\end{equation}
Therefore, when we restrict $\pi$ to the constraint set
$\mathcal{M}_{\pi}$ (Definition~\ref{def:var_constrained}), the optimal
policy
\begin{equation}
\pi^\dagger
=\arg\max_{\pi\in\mathcal{M}_{\pi}}J(\pi)
\end{equation}
is equivalently given by
\begin{equation}
\pi^\dagger
=
\arg\min_{\pi\in\mathcal{M}_{\pi}}
\mathrm{KL}\!\left(p_\pi(\tau)\,\big\|\,p^\ast(\tau)\right).
\end{equation}

\textbf{5. Decomposition of trajectory KL into per-state KLs.}
Both $p_\pi(\tau)$ and $p^\ast(\tau)$ factorize according to the same dynamics:
\begin{align}
p_\pi(\tau)
&=
p(\mathbf{s}_1)
\prod_{t=1}^{T-1}
\pi(\mathbf{a}_t\mid \mathbf{s}_t)\,
p(\mathbf{s}_{t+1}\mid \mathbf{s}_t,\mathbf{a}_t),\\
p^\ast(\tau)
&=
p(\mathbf{s}_1)
\prod_{t=1}^{T-1}
\pi^\ast(\mathbf{a}_t\mid \mathbf{s}_t)\,
p(\mathbf{s}_{t+1}\mid \mathbf{s}_t,\mathbf{a}_t).
\end{align}
Hence the KL divergence simplifies to
\begin{align}
\mathrm{KL}\!\left(p_\pi(\tau)\,\big\|\,p^\ast(\tau)\right)
&=
\mathbb{E}_{p_\pi(\tau)}\!\left[
\log\frac{p_\pi(\tau)}{p^\ast(\tau)}
\right]
\\
&=
\mathbb{E}_{p_\pi(\tau)}\!\left[
\sum_{t=1}^{T-1}
\log\frac{\pi(\mathbf{a}_t\mid \mathbf{s}_t)}{\pi^\ast(\mathbf{a}_t\mid \mathbf{s}_t)}
\right]
\\
&=
\sum_{t=1}^{T-1}
\mathbb{E}_{\mathbf{s}_t\sim d_\pi}\!
\left[
\mathrm{KL}\!\left(\pi(\cdot\mid \mathbf{s}_t)\,\big\|\,\pi^\ast(\cdot\mid \mathbf{s}_t)\right)
\right],
\end{align}
where $d_\pi$ denotes the (discount-free) state visitation distribution under
$\pi$ at time $t$. The key point is that the dynamics and initial-state
distribution cancel and only the action distributions appear inside the KL.

\textbf{6. Factorized constraint and per-state reverse-KL projection.}
The constraint set $\mathcal{M}_{\pi}$ is assumed to factorize across
states:
\begin{equation}
\mathcal{M}_{\pi}
=
\prod_{\mathbf{s}}
\mathcal{M}_{\pi}(\mathbf{s}),
\end{equation}
where $\mathcal{M}_{\pi}(\mathbf{s})$ is the feasible set of action
distributions at state $\mathbf{s}$. Because of this product structure,
choosing $\pi\in\mathcal{M}_{\pi}$ amounts to choosing independently
each $\pi(\cdot\mid \mathbf{s})\in\mathcal{M}_{\pi}(\mathbf{s})$.

Since
\begin{equation}
\mathrm{KL}\!\left(p_\pi(\tau)\,\big\|\,p^\ast(\tau)\right)
=
\sum_{t}
\mathbb{E}_{\mathbf{s}_t\sim d_\pi}
\Big[
\mathrm{KL}\big(\pi(\cdot\mid \mathbf{s}_t)\,\big\|\,\pi^\ast(\cdot\mid \mathbf{s}_t)\big)
\Big],
\end{equation}
minimizing this KL over the product set
$\mathcal{M}_{\pi}=\prod_{\mathbf{s}}\mathcal{M}_{\pi}(\mathbf{s})$
decouples into independent minimizations at each state:
\begin{equation}
\pi^\dagger(\cdot\mid \mathbf{s}_t)
=
\arg\min_{\pi(\cdot\mid \mathbf{s}_t)\in\mathcal{M}_{\pi}(\mathbf{s}_t)}
\mathrm{KL}\!\big(\pi(\cdot\mid \mathbf{s}_t)\,\big\|\,\pi^\ast(\cdot\mid \mathbf{s}_t)\big).
\end{equation}

\textbf{7. Conclusion.}
Combining the above steps, we conclude that the constrained optimal policy
$\pi^\dagger$ is obtained by, at each state $\mathbf{s}_t$, projecting the
global soft-optimal policy $\pi^\ast$ onto the feasible action-distribution
set $\mathcal{M}_{\pi}(\mathbf{s}_t)$ in the sense of reverse KL:
\begin{equation}
\pi^\dagger(\cdot\mid \mathbf{s}_t)
=
\arg\min_{\pi\in\mathcal{M}_{\pi}(\mathbf{s}_t)}
\mathrm{KL}\!\big(\pi(\cdot\mid \mathbf{s}_t)\,\|\,\pi^\ast(\cdot\mid \mathbf{s}_t)\big).
\end{equation}
This proves the theorem.
\end{proof}
\subsection{Proof of Theorem~\ref{thm:two_stage_invariance_var}}
\thmtsinvar*
\begin{proof}
Write the full objective by splitting the action sequence into prefix and
suffix:
\begin{equation}
J(\pi)
=
\underbrace{
\mathbb{E}\big[ R(s_T)\,\big|\, s_m,\pi_{m:T-1}\big]
-
\eta\,
\mathrm{KL}\!\left(\pi_{m:T-1}\,\big\|\, \pi_{\mathrm{old},\,m:T-1}\right)
}_{\triangleq\;J_{\mathrm{suffix}}(s_m,\pi_{m:T-1})}
+
\underbrace{
-\eta\,
\mathrm{KL}\!\left(\pi_{1:m-1}\,\big\|\, \pi_{\mathrm{old},\,1:m-1}\right)
}_{\text{prefix KL}}
,
\end{equation}
and take the outer expectation over \(s_m\) induced by the prefix policy
\(\pi_{1:m-1}\). For any fixed \(s_m\), the inner maximization over suffix
policies is exactly the standard soft-control problem whose value is
\(V_m^\ast(s_m)\). Therefore
\begin{equation}
\max_{\pi}\, J(\pi)
=
\max_{\pi_{1:m-1}}
\Big\{
\mathbb{E}\big[ V_m^\ast(s_m)\,\big|\, \pi_{1:m-1}\big]
-
\eta\,
\mathrm{KL}\!\left(\pi_{1:m-1}\,\big\|\, \pi_{\mathrm{old},\,1:m-1}\right)
\Big\},
\end{equation}
and the maximizer is obtained by (i) choosing the suffix soft-optimal
\(\pi^{\ast}_{m:T-1}\) for each \(s_m\), and (ii) choosing the prefix policy
\(\pi^{\ast}_{1:m-1}\) that maximizes the soft value of \(s_m\).
Concatenation yields the unique full-horizon maximizer \(\pi^\ast\).
\end{proof}

{
\footnotesize 
\onecolumn
\begin{longtable}{p{0.05\textwidth} p{0.9\textwidth}}
    \caption{Detailed prompts used in Fig. \ref{fig:ocr_compare}.} \label{tab:long_prompts_ocr} \\
    \toprule
    \textbf{ID} & \textbf{Prompt Content} \\
    \midrule
    \endfirsthead
    
    \multicolumn{2}{c}%
    {{\bfseries \tablename\ \thetable{} -- continued from previous page}} \\
    \toprule
    \textbf{ID} & \textbf{Prompt Content} \\
    \midrule
    \endhead
    
    \midrule
    \multicolumn{2}{r}{{Continued on next page}} \\
    \bottomrule
    \endfoot
    
    \bottomrule
    \endlastfoot

    1 & Six illuminated letters ('A', 'B', 'C', 'N', 'O', 'Y') in two rows on a dark blue background, outlined with white bulbs glowing blue. \\
    \midrule
    2 & The image is a shield-shaped graphic with a black border. Inside the shield, there are three horizontal stripes: red at the top, white in the middle, and blue at the bottom. The red stripe contains white text that reads, ``Wouldn't you rather...'' The white stripe in the middle has bold black text that says, ``VOTE.'' And the blue stripe at the bottom features white text that reads, ``By Mail.'' The overall design suggests a campaign or initiative encouraging people to vote by mail. \\
    \midrule
    3 & The image features two small, round, metallic tin containers placed on a dark, textured fabric background. The lids of the tins are slightly open, revealing the contents inside. The tins have a vintage or antique appearance, with one lid displaying an engraved design. The text ``M-BOX 2.0'' is prominently overlaid on the image in a bold, yellow font, while ``by: Jimmy-Fan'' is written in a smaller, yellow font below it. The overall composition suggests a promotional or product image for the tins, textitasizing their design and branding. \\
    \midrule
    4 & The image depicts a serene nighttime camping scene in a forest. The atmosphere is illuminated by a soft, glowing light emanating from within a tent, casting a warm and inviting glow on the surrounding area. The tents are set up on a forest floor covered with pine needles and leaves. The trees are tall and dense, creating a canopy that stretches out into the night sky. The overall mood of the image is tranquil and adventurous, capturing the essence of an outdoor camping experience. In the upper right corner, there is text in Spanish that reads ``DESCUBRE LO QUE ESCONDEN NUESTRO CAMPING,'' and in the bottom right corner, the words ``UPBAN KIDS'' are displayed, with ``UPBAN'' in white and ``KIDS'' in green. \\
    \midrule
    5 & The image shows a cardboard sign with handwritten text in French. The text reads ``L'ARANCAEY FIMMNA,'' with ``L'ARANCAEY'' written in purple and ``FIMMNA'' in black. The sign appears to be placed against a wall, and there are some items and containers visible in the background. \\
    \midrule
    6 & The image features a cartoon-style illustration of a person standing and holding a fishing rod. The person appears to be a stylized representation of a man with light skin, wearing a white shirt, a red tie, and dark pants. The fishing rod is angled upwards, and at the end of the line, there is a white fish with an upward-pointing arrow, suggesting the fish has been caught. Above the person, there is a red banner with the text ``FISHING CHALLENGE'' in white letters. To the right of the person, the word ``GOTCHA!'' is written in white text, indicating the successful catch of the fish. The background is a solid light green color. \\
    \midrule
    7 & The image features four packets of snacks against an orange background. Each packet has a distinct design and is labeled with different types of snacks. The first packet on the left has a black and orange swirl pattern and is labeled ``Apps Way Crisps.'' The second packet has a yellow and black crisscross pattern and is labeled ``Apps Wheat Crisps.'' The third packet has black diagonal stripes and is labeled ``Apps Potato Crisps.'' The fourth packet has a black polka dot pattern and is labeled ``Apps Cheese Balls.'' The design elements and text on each packet are consistent, suggesting they are part of a branded product line. \\
    \midrule
    8 & The image shows a piece of paper with handwritten notes on it, placed on a wooden surface. The text writes: `VAR RL Done Right'. The handwriting is informal and includes some slang or abbreviations. The paper seems to be printed with a grid pattern, suggesting it might be a printed form or a piece of paper with a pre-defined layout. \\
\end{longtable}
}

{
\footnotesize 
\onecolumn
\begin{longtable}{p{0.05\textwidth} p{0.9\textwidth}}
    \caption{Detailed prompts used in Fig. \ref{fig:hps_visual}.} \label{tab:long_prompts_hps} \\
    \toprule
    \textbf{ID} & \textbf{Prompt Content} \\
    \midrule
    \endfirsthead
    
    \multicolumn{2}{c}%
    {{\bfseries \tablename\ \thetable{} -- continued from previous page}} \\
    \toprule
    \textbf{ID} & \textbf{Prompt Content} \\
    \midrule
    \endhead
    
    \midrule
    \multicolumn{2}{r}{{Continued on next page}} \\
    \bottomrule
    \endfoot
    
    \bottomrule
    \endlastfoot

    1 & A cheerful, anthropomorphic squirrel stands upright against a solid light green background. The squirrel has a rich brown coat with a lighter cream-colored underbelly, muzzle, and inner ears, which are pink at the tips. Its large, expressive black eyes are wide and friendly, with a small, triangular nose and a wide smile showing two prominent front teeth. The squirrel's bushy tail is curled in a spiral at the end, with a gradient from dark brown at the base to a lighter cream color at the tip. Around its neck, it wears a vibrant flower lei composed of small, colorful blossoms in red, yellow, green, purple, and blue. The squirrel holds a large, smooth, light brown egg in both front paws, which are positioned close to its chest. Its posture is upright, with its hind legs planted firmly on the ground, and it casts a soft shadow beneath it on the green background. \\
    \midrule
    2 & An anime-style illustration depicts a muscular, metallic tiger with sharp, angular features, standing on a rooftop. The tiger is in a dynamic pose, gripping a sleek, red electric guitar, and its mouth is open wide as if caught in the midst of a powerful roar or song. Above the tiger, a bright spotlight casts a dramatic beam of light, illuminating the scene and creating stark shadows on the surrounding rooftop features. \\
    \midrule
    3 & A surreal figure appears to be sculpted from intertwining tendrils of gray smoke and whirling flurries of snow, giving the impression of a man caught in a blizzard. In one hand, this ethereal being holds what looks to be a gateway to the cosmos, depicted in a photorealistic manner with vibrant nebulae and star clusters visible within its confines. The entire scene is a highly detailed octane render, showcasing sharp contrasts and the interplay of light and shadow that imbues the image with a sens. \\
    \midrule
    4 & An ornate royal carriage, painted in deep red with golden trim, stands prominently against a landscape blanketed in pristine snow. Behind it, the silhouettes of tall pine trees dusted with white can be discerned through the soft haze of a winter's day. In front of the carriage, the snow-covered ground glistens under the subtle light of the afternoon sun. \\
    \midrule
    5 & a detailed 17th-century Dutch Baroque painting depicting a chestnut horse standing amidst a vibrant field of tulips and daisies. The horse's mane and tail are elegantly captured by the artist, flowing with the gentle breeze that seems to animate the scene. In the background, a traditional windmill sits under a partly cloudy sky, completing this pastoral landscape. \\
    \midrule
    6 & A photograph with a standard lens style. The subject is a man standing against a background that appears to be covered in splattered paint. The man has short, dark hair and a light beard. He is wearing a long-sleeved, button-up shirt that is also splattered with paint, suggesting he might have been involved in an artistic or creative activity. The shirt is a light color, possibly beige or off-white, and has two chest pockets. The man's expression is serious and contemplative. The background is primarily white with red and black paint splatters scattered across it. The lighting in the photograph is soft and even, highlighting the details of the man's face and the texture of his shirt. The overall mood of the image is artistic and somewhat moody, with a focus on the subject's expression and the abstract splatters in the background. \\
    \midrule
    7 & Two young girls stand on a lush, green grassy field, both dressed in white lace sleeveless dresses adorned with a purple satin ribbon tied in a bow at the waist. The girl in the foreground has shoulder-length, wavy blonde hair and wears a floral crown featuring white and purple flowers, including roses and possibly lisianthus, with small greenery accents. She holds a small bouquet composed of white flowers, purple blooms, and clusters of dark red berries, with some green foliage. Behind her, the second girl, also with blonde hair (slightly lighter and more tousled), wears an identical floral crown and dress. She holds a bouquet of vibrant pink flowers, likely carnations, with green stems. Both girls are smiling brightly, their eyes a striking blue, and the wind gently lifts strands of their hair. The background is a soft, out-of-focus expanse of green grass, dotted with tiny white flowers, creating a serene, natural setting. \\
    \midrule
    8 & a breathtaking photograph capturing the vibrant hues of a sunset with streaks of pink and orange painting the sky behind the majestic Grand Canyon. The canyon's intricate rock formations are silhouetted against the illuminated backdrop, showcasing the deep crevices and towering spires. In the foreground, the Colorado River can be glimpsed winding its way through the ancient geological marvel. \\
\end{longtable}
}

\end{CJK*}
\end{document}